\DeclareBoldMathCommand{\p}{p}
\DeclareBoldMathCommand{\z}{z}
\DeclareBoldMathCommand{\w}{w}
\DeclareBoldMathCommand{\W}{W}
\DeclareBoldMathCommand{\y}{y}
\DeclareBoldMathCommand{\h}{h}
\DeclareBoldMathCommand{\q}{q}
\DeclareBoldMathCommand{\e}{e}
\DeclareBoldMathCommand{\b}{b}
\DeclareBoldMathCommand{\g}{g}
\DeclareBoldMathCommand{\u}{u}
\DeclareBoldMathCommand{\U}{U}
\DeclareBoldMathCommand{\L}{L}
\DeclareBoldMathCommand{\a}{a}
\DeclareBoldMathCommand{\x}{x}
\DeclareBoldMathCommand{\c}{c}
\DeclareBoldMathCommand{\I}{I}
\DeclareBoldMathCommand{\bell}{\ell}
\newcommand{\vs}{\v{s}}
\renewcommand{\hat}{\widehat}
\renewcommand{\tilde}{\widetilde}
\renewcommand{\epsilon}{\varepsilon}
\newcommand{\KL}{\textnormal{KL}}
\newcommand{\reals}{\mathbb{R}}
\newcommand{\ltil}{\tilde{\ell}}
\newcommand{\half}{\tfrac{1}{2}}
\newcommand{\id}{\mathbbm{1}}
\newcommand{\domainw}{\mathcal{W}}
\renewcommand{\ln}{\log}
\newcommand{\clip}{\textnormal{clip}}
\newcommand{\ball}{\Theta_b}
\newcommand{\ybar}{\overline{y}}
\newcommand{\sumT}{\sum_{t=1}^T}
\newcommand{\sumt}{\sum_{s=1}^t}
\newcommand{\Fset}{\mathcal{F}}
\newcommand{\Xset}{\mathcal{X}}
\newcommand{\Yset}{\mathcal{Y}}
\newcommand{\ymax}{l} \newcommand{\vaw}{\textnormal{VAW}}
\newcommand{\fbar}{\bar{f}}
\DeclareMathOperator{\E}{\mathbb E}
\DeclareMathOperator*{\argmin}{argmin}
\newtheorem{proposition}{Proposition}
\newtheorem{theorem}{Theorem}
\newtheorem{lemma}{Lemma}
\newtheorem{corollary}{Corollary}
\theoremstyle{definition}
\newtheorem{example}{Example}
\theoremstyle{remark}
\DeclareRobustCommand{\VAN}[3]{#2} 
\newcommand{\TODO}[1]{\ifmmode
\text{\textcolor{red}{TODO: #1}}
\else
\textcolor{red}{TODO: #1}
\fi
}
\newcommand{\Dirk}[1]{\ifmmode
\text{\textcolor{red}{Dirk: #1}}
\else
\textcolor{red}{Dirk: #1}
\fi
}
\DeclareRobustCommand{\VAN}[3]{#2}
\title{
High-Probability Risk Bounds via Sequential Predictors
}
\author{
  Dirk van der Hoeven\thanks{Korteweg-de Vries Institute for Mathematics, University of Amsterdam, The Netherlands, \href{mailto:dirk@dirkvanderhoeven.com}{dirk@dirkvanderhoeven.com}}
  \qquad
  Nikita Zhivotovskiy\thanks{Department of Statistics, University of California, Berkeley, USA, \href{mailto:zhivotovskiy@berkeley.edu}{zhivotovskiy@berkeley.edu}}
  \qquad
  Nicol{\`o} Cesa-Bianchi\thanks{Università degli Studi di Milano and Politecnico di Milano, Milano, Italy, \href{mailto:nicolo.cesa-bianchi@unimi.it}{nicolo.cesa-bianchi@unimi.it}}
}
\date{}
\begin{document}

\maketitle

\begin{abstract}
{Online learning methods yield sequential regret bounds under minimal assumptions and  provide in-expectation risk bounds for statistical learning.
However, despite the apparent advantage of online guarantees over their statistical counterparts, recent findings indicate that in 
many important cases,
regret bounds may not guarantee tight high-probability risk bounds in the statistical setting.
In this work we show that online to batch conversions applied to general online learning algorithms can bypass this limitation.
Via a general second-order correction to the loss function defining the regret, we obtain nearly optimal high-probability risk bounds for several classical statistical estimation problems, such as discrete distribution estimation, linear regression, logistic regression, and---more generally---conditional density estimation.}
Our analysis relies on the fact that many online learning algorithms are improper, as they are not restricted to 
 use predictors from a given reference class.
   The improper nature of our estimators enables significant improvements in the dependencies on various problem parameters. Finally, we discuss some computational advantages of our sequential algorithms over their existing batch counterparts.
\end{abstract}

\section{Introduction}\label{sec:intro}

One of the standard methods for the statistical analysis of learning algorithms is to exploit the corresponding risk (regret) bounds in the online learning setup, a technique known as \emph{online to batch conversion}. This idea has a long history in the statistics and machine learning literature. For instance, Vapnik and Chervonenkis \cite{vapnik74theory} used the online mistake bound of the Perceptron algorithm \cite{novikoff62convergence} to bound its expected risk in the batch statistical setting with i.i.d.\ data. Early works on kernel methods \cite{Aizerman1965} and early stopping criteria \cite[Theorem 4.1]{vapnik74theory} also used online to batch conversion arguments. {Over the years, sequential methods have shown numerous applications in the analysis of purely statistical problems, such as density estimation \cite{barron1987bayes, yang1999information, catoni1997mixture} and aggregation of estimators \cite{tsybakov2003optimal, juditsky2008learning, audibert2009fast}.
Summarizing the existing connections between sequential and statistical analysis, it is now well established that any \emph{regret bound} in the online learning setup can be translated into an in-expectation excess risk bound, provided that the loss is convex \cite[Theorem 5.1]{shalev2012online}.}

The situation is more subtle when we are interested in excess risk bounds that hold \emph{with high probability}, and even, remarkably, \emph{constant probability bounds}, as discussed further below. The work of Littlestone \cite{Littlestone1989} provided the optimal high-probability online to batch conversion in the realizable binary classification setup. When one is interested in the so-called slow rate $O\left(\frac{1}{\sqrt{T}}\right)$, where $T$ is the sample size, the optimal high-probability bounds typically follow from martingale extensions of standard concentration inequalities \cite{cesa2004generalization}. More recently, Kakade and Tewari \cite{kakade2009generalization} provided a high probability $O\left(\frac{1}{T}\right)$ excess risk bound for strongly convex and Lipschitz losses. Their results were further extended to the more general exp-concave losses by Mehta \cite{mehta2017fast}. See also \cite{puchkin23a}. The fundamental limitation of almost all the abovementioned $O\left(\frac{1}{T}\right)$ bounds is that the learning procedures are assumed to be \emph{proper}: These are learning algorithms that output their models in a particular reference class, usually assumed to be convex.

The importance of using \emph{improper} learning procedures, which are allowed to make predictions independently of any specific reference class, has been recently highlighted in several contexts in both statistical and online learning. The work of Foster et al. \cite{foster2018logistic}---see also the work of Kakade and Ng \cite{kakade2004online}---showed that in order to get $O\left(\frac{1}{T}\right)$ risk bounds for logistic regression, one should use improper learners to bypass the prohibitive exponential dependence on the parameters of the problem that appears for any proper learning procedure, as shown by Hazan, Koren, and Levy \cite{hazan2014logistic}. Similarly, Va\vs kevi\v{c}ius
 and Zhivotovskiy
\cite{vavskevivcius2020suboptimality}---see also \cite{forster2002relative, shamir2015sample}---showed the necessity of being improper in the context of batch linear regression with squared loss. Their analysis allows to completely ignore the dependence on the distribution of the random design matrix, which becomes impossible when restricted to proper learners only. Finally, from the standard perspective of model aggregation---see the work \cite{tsybakov2003optimal} of Tsybakov for the exact setup---using finite (and therefore non-convex) families of predictors, one should use \emph{improper estimators} to achieve the optimal $O\left(\frac{1}{T}\right)$ excess risk bound \cite[Section 3.5]{catoni2004statistical}, \cite{juditsky2008learning}.

When working with \emph{improper learners}, converting a constant (logarithmic) online learning regret bound into a $O\left(\frac{1}{T}\right)$ high-probability excess risk bound is a challenging problem. A curious result of Audibert \cite{audibert2007progressive} shows that the standard exponential weights algorithm, while giving an optimal $O\left(\frac{1}{T}\right)$ excess risk bound in-expectation for the squared loss, does not do so with high probability. In fact, the author of \cite{audibert2007progressive} showed an $\Omega\left(\frac{1}{\sqrt{T}}\right)$ lower bound for online to batch converted exponential weights in the high-probability setup. This behaviour is due to the improper structure of exponential weights: when making a linear combination of a finite set of predictors, one can output a predictor that outperforms the best predictor in the finite set. The negative part of the excess risk can compensate its positive part, so that the excess risk remains small in expectation, while it can become large in probability. The main message of this article is to further highlight the following principle, which will be explained in more detail below.
\begin{framed}
While in-expectation online to batch conversions are widely used in the statistics and machine learning literature, we argue that they should be employed with caution when dealing with improper learners. This is because, in some cases, it is hard to establish a non-trivial constant probability excess risk upper bound, and standard confidence boosting methods may not be applicable. Therefore, we need to explore different approaches to the design of online algorithms with small high-probability excess risk bounds.
\end{framed}
The route to obtain high probability $O\left(\frac{1}{T}\right)$ excess risk bounds via online to batch conversions of improper learners was initiated by Wintenberger \cite{wintenberger2017optimal}---see also \cite{gaillard2018efficient}. Improving upon their results, the authors of this article derived in \cite{van2022regret} a simple high-probability analysis for strongly convex losses that covers more general setups (including linear regression) and showed multiple applications of the negative terms appearing in the analysis of online learning algorithms. In this work we provide a new analysis that extends to exp-concave losses while focusing on explicit and improved dependencies on different parameters of the learning problems. In contrast to \cite{van2022regret}, here we directly work with losses rather than working with loss gradients. As a consequence, we only assume that the loss is bounded instead of being Lipschitz. 
One particular application of our ideas is in logistic regression, where the authors of \cite{mourtada2019improper} noted that the online to batch conversion in \cite{foster2018logistic} based on the confidence boosting scheme is incorrect. The problem is in the improper nature of their algorithm: a good in-expectation performance of an improper algorithm does not necessarily lead to a good performance, even with constant probability. So, it remained open whether one could construct an online to batch conversion achieving a logarithmic dependence on the parameters with high probability, as originally claimed in \cite{foster2018logistic}. In this work, among other results, we provide such a conversion. 

The remainder of the paper is structured as follows. In Section~\ref{sec:prelim} we formally introduce the setting and prove some inequalities we use throughout the paper. In Section~\ref{sec:mainresult} we provide the main technical result of this work: any algorithm with regret $R_T$ for arbitrary $\alpha$-exp concave losses with absolute differences bounded my $m$ can be modified to guarantee an excess risk of order $\frac{1}{T}\big(R_T + \gamma \log\frac{1}{\delta}\big)$ with probability at least $1- \delta$, where $\gamma = 4\max\{\frac{1}{\alpha}, m\}$. We apply the main result to conditional density estimation (Section~\ref{sec:density}), logistic regression (Section~\ref{sec:logreg}), and generalized linear models (Section~\ref{sec:glm}). In Section~\ref{sec:modelselagg} we show that a simple modification of exponential weights can be used to derive the optimal rate for model aggregation. Finally, in Section~\ref{sec:linleg} we apply our results to linear regression with squared loss, and derive optimal rates up to log factors with a computationally efficient algorithm.

\section{Notation and preliminaries}\label{sec:prelim}

We assume that we are given a family $\mathcal F$ of real-valued functions defined on a measurable instance space $\mathcal X$. We observe $T$ i.i.d.\ observations $(X_t, Y_t)_{t = 1}^T$ distributed according to some unknown distribution $\mathbb{P}$ on $\mathcal X \times \mathbb{R}$. Throughout the paper, we use the notation $\E_{t-1}[\,\cdot\,] = \E\big[\,\cdot\,|\,(Y_1, X_1), \ldots, (Y_{t-1}, X_{t-1})\big]$. Given a loss function $\ell: \mathbb{R}^2 \to \mathbb{R}$, the \emph{risk} of $f: \mathcal X \to \mathbb{R}$ is given by
$
\E_{X, Y}\ell\big(f(X), Y\big),
$
where the expectation is taken with respect to the joint distribution $\mathbb{P}$ of $X$ and $Y$. We are interested in bounding the \emph{excess risk}
$$
\E_{X, Y}\ell\big(\hat f(X), Y\big) - \inf\limits_{f \in \mathcal F}\E_{X, Y}\ell\big(f(X), Y\big)~,
$$
where $\hat f$ is constructed based on the sample $(X_t, Y_t)_{t = 1}^T$. When a particular loss is clear from the context, we sometimes use the abbreviated notation 
\[
R(f) = \E\ell\big(\hat f(X), Y\big)~.
\]
One of the key assumptions on the loss we use is the \emph{exp-concavity}.
For $\domainw \subseteq \reals$ we say that a function $h:\domainw \to \mathbb{R}$ is $\alpha$-exp-concave if
\begin{align}\label{eq:defexpconcave}
    \alpha h'(w)^2 \leq h''(w) \qquad \text{for all $w \in \domainw$}~.
\end{align}
Here $h'$ and $h''$ denote the first and the second derivatives of $h$ respectively.
We say that the loss function $\ell(\cdot, y)$ is $\alpha$-exp-concave if it is an $\alpha$-exp-concave function with respect to its first argument for all $y$ in the domain of $Y$. The analysis of these losses traces back to the foundational work by Vovk \cite{vovk1990aggregating}. A more detailed treatment of these losses appears in the monograph \cite[Section 3.3]{cesa2006prediction}. 
We now prove the following simple lemma\footnote{The result of Lemma~\ref{lem:negterm} also appears explicitly in recent work \cite{saad2022constant}, although with a different choice of $\gamma$. Their result is used in a different context.} which will play an important role in our derivations. 
\begin{lemma}\label{lem:negterm}
Consider an $\alpha$-exp-concave function $h:\domainw \to \mathbb{R}$ satisfying $h(x) - h(y) \le m$ for all $x, y \in \mathcal W$, where $m > 0$. Let $\gamma = 4 \max\big\{m, \frac{1}{\alpha}\big\}$. Then,
\begin{align*}
    h\big( \half x + \half y\big) \leq \half h(x) + \half h(y) -  \frac{\big(h(x) - h(y)\big)^2}{4 \gamma}~, \qquad\text{for all $x, y \in \domainw$}~.
\end{align*}
\end{lemma}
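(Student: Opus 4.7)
The plan is to leverage $\alpha$-exp-concavity by rewriting it as concavity of $g(w) := e^{-\alpha h(w)}$, apply midpoint concavity of $g$, and then extract the desired quadratic negative term via an elementary inequality for $\log\cosh$. The argument will reduce the lemma to a scalar inequality on $[-\alpha m/2,\,\alpha m/2]$ that can be verified by hand.

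First I would verify that $g$ is concave on $\mathcal{W}$: differentiating twice gives $g''(w) = \alpha\, g(w)\bigl(\alpha h'(w)^2 - h''(w)\bigr)$, which is nonpositive by \eqref{eq:defexpconcave}. Writing $\mu := \tfrac12(h(x)+h(y))$ and $s := \tfrac12(h(x)-h(y))$, the factorization $\tfrac12 e^{-\alpha h(x)} + \tfrac12 e^{-\alpha h(y)} = e^{-\alpha\mu}\cosh(\alpha s)$ together with midpoint concavity of $g$ and applying $-\tfrac{1}{\alpha}\log$ to both sides gives $h\bigl((x+y)/2\bigr) \le \mu - \tfrac{1}{\alpha}\log\cosh(\alpha s)$. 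Since $(h(x)-h(y))^2/(4\gamma) = s^2/\gamma$, it remains to show the scalar bound $\tfrac{1}{\alpha}\log\cosh(\alpha s) \ge s^2/\gamma$ on $|s| \le m/2$.

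For the scalar step I would use two elementary facts. (i) $\log\cosh u \ge \tfrac12 u\tanh u$ for all $u\in\mathbb{R}$: set $\psi(u) := \log\cosh u - \tfrac12 u\tanh u$; by evenness it suffices to treat $u\ge 0$, and one checks $\psi(0)=\psi'(0)=0$ and $\psi''(u) = u\,\mathrm{sech}^2(u)\tanh(u) \ge 0$. (ii) $\tanh$ is concave on $[0,\infty)$ with $\tanh(0)=0$, so $\tanh(u)/u$ is nonincreasing in $u>0$. Applying (i)--(ii) with $u=\alpha s$ on $|u|\le\alpha m/2$ reduces the target to $\tanh(\alpha m/2) \ge m/\gamma$. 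Using the identity $m/\gamma = \min\{1/4,\,\alpha m/4\}$ (from the definition of $\gamma$), this splits cleanly into two cases: when $\alpha m \le 1$, the chord bound for the concave function $\tanh$ on $[0,1/2]$ yields $\tanh(\alpha m/2) \ge \alpha m\cdot\tanh(1/2) \ge \alpha m/4$; when $\alpha m > 1$, monotonicity gives $\tanh(\alpha m/2) > \tanh(1/2) > 1/4$.

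The main obstacle is ensuring the constants in the $\log\cosh$ and $\tanh$ bounds line up precisely with the threshold $\alpha m = 1$ that arises from $\max\{m,1/\alpha\}$ in $\gamma$. The two regimes reflect a genuine transition: when the exp-concavity curvature $\alpha$ dominates, it controls the negative quadratic term directly; when the range parameter $m$ becomes binding, the negative term is instead governed by $1/m$. Both are handled simultaneously by the identity $m/\gamma = \min\{1/4,\,\alpha m/4\}$ together with the two elementary inequalities above.
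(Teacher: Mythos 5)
Your proof is correct, and it takes a genuinely different route from the paper's. The paper introduces the auxiliary function $g(\cdot) = h(\cdot) - h(z) - (h(\cdot)-h(z))^2/\gamma$, shows it is convex by combining the exp-concavity inequality with the bound $h(\cdot)-h(z)\le m$ and the choice of $\gamma$, applies midpoint convexity, and then splits into two cases according to whether $h\bigl(\tfrac{x+y}{2}\bigr)\le h(y)$ or not, picking the reference point $z$ differently in each case to absorb a spurious extra term. You instead pass to the equivalent formulation that $e^{-\alpha h}$ is concave, apply midpoint concavity directly, factor the right-hand side as $e^{-\alpha\mu}\cosh(\alpha s)$, and reduce the lemma to the scalar estimate $\tfrac{1}{\alpha}\log\cosh(\alpha s)\ge s^2/\gamma$ for $|s|\le m/2$, which you verify via $\log\cosh u\ge\tfrac12 u\tanh u$ and the monotonicity of $\tanh(u)/u$. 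Your route avoids both the case analysis and the introduction of the free point $z$, and it makes transparent exactly where the threshold $\alpha m=1$ (equivalently $\max\{m,1/\alpha\}$) enters: it is precisely the crossover in $m/\gamma=\min\{1/4,\alpha m/4\}$ matched against the chord bound for $\tanh$ on $[0,1/2]$. The price is the small amount of hyperbolic-function calculus; the paper's route stays with polynomials in $h$ but pays with the two-case argument. Both yield the same constant $\gamma=4\max\{m,1/\alpha\}$.
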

\begin{proof}
Fix any $z \in \mathcal W$, and let $g(\cdot) = h(\cdot) - h(z) - \frac{(h(\cdot) - h(z))^2}{\gamma}$. Note that $g$ is convex because 
\begin{align*}
    g''(x) &= h''(x) - \frac{2}{\gamma}\big(h''(x) (h(x) - h(z)) + (h'(x))^2\big) \\
    &\geq  h''(x) - \frac{2}{\gamma}\left(h''(x)m + \frac{ h''(x)}{\alpha}\right) \ge 0~,
\end{align*}
where in the first inequality we used the definition \eqref{eq:defexpconcave} of exp-concavity, which implies, in particular, that $h''(x) \ge 0$, and the assumption $h(x) - h(z) \le m$, while in the second inequality we used the definition of $\gamma$.
For $x, y \in \domainw$, the convexity of $g$ implies $g\big(\half x + \half y\big) \leq \half g(x) + \half g(y)$. When reordered, this gives
\[
    h\big(\half x + \half y\big) \leq \half h(x) + \half h(y) -  \frac{(h(x) - h(z))^2}{2 \gamma} - \frac{(h(y) - h(z))^2}{2 \gamma} + \frac{\left(h\big(\half x + \half y\big) - h(z)\right)^2}{\gamma}~.
\]
Assume without the loss of generality that $h(x) \ge h(y)$.
Consider two cases. If $h\big(\half x + \half y\big) \le h(y)$, then choose $z = \half x + \half y$. In this case,
\begin{align*}
h\big(\half x + \half y\big) &\leq \half h(x) + \half h(y) -  \frac{\left(h(x) - h\big(\half x + \half y\big)\right)^2}{2 \gamma} - \frac{\left(h(y) - h\big(\half x + \half y\big)\right)^2}{2 \gamma}
\\
&\leq \half h(x) + \half h(y) -  \frac{(h(x) - h(y))^2}{2 \gamma}~.
\end{align*}
Otherwise, if $h\big(\half x + \half y\big) > h(y)$, we choose $z = y$. In this case, using the convexity of $h$, we have
\begin{align*}
    h\big(\half x + \half y\big) &\leq \half h(x) + \half h(y) -  \frac{(h(x) - h(y))^2}{2 \gamma}  + \frac{\left(h\big(\half x + \half y\big) - h(y)\right)^2}{\gamma}
    \\
    &\leq \half h(x) + \half h(y) -  \frac{(h(x) - h(y))^2}{2 \gamma}  + \frac{\left(\half h(x) - \half h(y)\right)^2}{\gamma}
    \\
    &= \half h(x) + \half h(y) -  \frac{(h(x) - h(y))^2}{4 \gamma}~.
\end{align*}
\\The claim follows.
\end{proof}
We note that the assumption $h(x) - h(z) \le m$ is always satisfied if $h(\cdot)$ takes its values in $[0, m]$. However, in our application to logarithmic loss, it will be easier to control $h(x) - h(z) \le m$ without assuming that $h(\cdot)$ itself is bounded by $m$.
A simple rearrangement of the inequality proven in Lemma~\ref{lem:negterm} shows that, for any $\alpha$-exp-concave function satisfying the assumptions of Lemma~\ref{lem:negterm},
\begin{align}\label{eq:simplenegative}
    h(x) - h(y) \leq 2 h(\half x + \half x) - 2 h\big(\half x + \half y\big) - \frac{1}{2\gamma}\big(h(x) - h(y)\big)^2 \qquad\text{for all $x, y \in \domainw$}~,
\end{align}
where $\gamma = 4 \max\big\{m, \frac{1}{\alpha}\big\}$. In particular, the negative quadratic term in~\eqref{eq:simplenegative} is what compensates for the variance of the online to batch conversion. In the following section, we show precisely how. 

\section{Online to batch for improper learners with high probability}\label{sec:mainresult}

In this section, we state our main technical result. 
Let $\hat f_1, \ldots, \hat f_T$ be the sequence of predictors obtained by running some online algorithm on $(X_t, Y_t)_{t = 1}^T$. Here we mean online algorithm in the sense that each $\hat f_k$ only depends on $(X_t, Y_t)_{t = 1}^{k - 1}$. Note that we do not insist on $\hat f_t \in \Fset$, so these predictors may be improper.
Fix an $\alpha$-exp concave loss function $\ell$.
Because each $\hat f_k$ only depends on $(X_t, Y_t)_{t = 1}^{k - 1}$, we may consider $\hat f_1, \ldots, \hat f_T$ obtained by running our online algorithm (to be chosen later) on the shifted online loss function
\begin{align}\label{eq:shiftloss}
    \tilde{\ell}_t(f) = \ell\big(\half f(X_t) + \half \hat{f}_t(X_t), Y_t\big)~,
\end{align}
which is also $\alpha$-exp concave (Lemma~\ref{lem:surrogateexpconcave} in Appendix). We say that $\hat f_1, \ldots, \hat f_T$ satisfy the \emph{bounded shifted regret} condition if
\begin{equation}\label{eq:regretreq}
\sumT\Big(\tilde{\ell}_t\big(\hat f_{t}\big) -  \E_{f \sim Q}[\tilde{\ell}_t(f)]\Big)\le R_T
\end{equation}
almost surely with respect to $(X_t, Y_t)_{t = 1}^T$, where $Q$ is some fixed distribution over $\Fset$ and $\ltil_t$ is defined in~\eqref{eq:shiftloss}. We now show that the excess risk of $\frac{1}{T}\big(\hat f_1 + \cdots + \hat f_T\big)$, which is the standard predictor for online to batch conversions, is bounded with high probability in terms of the shifted regret.
\begin{theorem}\label{th:improperOTB}
Suppose that the loss function $\ell: \domainw \times \Yset \mapsto \mathbb{R}$ is $\alpha$-exp concave in its first argument. Assume that $\hat f_1, \ldots, \hat f_T$ satisfy the bounded shifted regret condition \eqref{eq:regretreq}, and that additionally $\left|\ell(\hat{f}_t(X_t), Y_t) - \ell(f(X_t), Y_t)\right| \le m$ almost surely for all $t = 1, \ldots, T$ and $f \in \mathcal F$. Then, the risk of the averaged estimator
\begin{align}\label{eq:OTB}
    \fbar_T = \frac{1}{T}\sum_{t=1}^T \hat{f}_t
\end{align}
satisfies, with probability at least $1-\delta$ with respect to the random draw of $(X_t, Y_t)_{t = 1}^T$,
\begin{align*}
    R(\bar{f}_T) - \E_{f \sim Q}[R(f)] & \leq \frac{2 R_T + 2 \gamma \log(1/\delta)}{T},
\end{align*}
where $\gamma = 4 \max\big\{m, \frac{1}{\alpha}\big\}$.
\end{theorem}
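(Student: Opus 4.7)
The plan is to control the high-probability excess risk by a martingale decomposition whose variance will be compensated by a pathwise negative quadratic extracted from Lemma~\ref{lem:negterm}. Since $\alpha$-exp-concavity implies convexity (by \eqref{eq:defexpconcave}, $h'' \ge 0$), Jensen's inequality gives $R(\bar f_T) \le \frac{1}{T}\sum_{t=1}^T R(\hat f_t)$; setting $\bar D_t := R(\hat f_t) - \E_{f \sim Q}[R(f)]$, it therefore suffices to show that, with probability at least $1-\delta$, $\sum_{t=1}^T \bar D_t \le 2 R_T + 2\gamma \log(1/\delta)$.

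For the pathwise piece, I would apply Lemma~\ref{lem:negterm} to $h = \ell(\cdot, Y_t)$ at $x = \hat f_t(X_t)$ and $y = f(X_t)$, rearrange into the form \eqref{eq:simplenegative}, average over $f \sim Q$, sum over $t$, and invoke \eqref{eq:regretreq}. This yields almost surely
\[
\sum_{t=1}^T D_t \le 2R_T - \tfrac{1}{2\gamma}\sum_{t=1}^T V_t,
\]
with $D_t := \E_{f\sim Q}[\ell(\hat f_t(X_t),Y_t) - \ell(f(X_t),Y_t)]$ and $V_t := \E_{f\sim Q}[(\ell(\hat f_t(X_t),Y_t) - \ell(f(X_t),Y_t))^2]$. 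Writing $\sum_t \bar D_t = \sum_t D_t + \sum_t \Delta_t$ with the martingale-difference term $\Delta_t := \bar D_t - D_t$, the remaining task reduces to proving, with probability $\ge 1 - \delta$, that $\sum_t \Delta_t \le \frac{1}{2\gamma}\sum_t V_t + 2\gamma \log(1/\delta)$, so that the $V_t$-sums cancel when one adds the two displays.

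The hard part will be that the concentration must feature the pathwise $V_t$ rather than the conditional variance $\E_{t-1}[V_t]$: a standard Freedman-style bound only delivers the latter, and replacing it by $V_t$ costs an additional $\sqrt{T}$-term that would ruin the rate. The plan is a PAC-Bayes-style exponential supermartingale argument applied per individual $f$. For each $f \in \mathcal{F}$ and each $\lambda \in (0, (\sqrt{2}-1)/(2m)]$, with $\delta_{s,f} := \ell(\hat f_s(X_s),Y_s) - \ell(f(X_s),Y_s)$ and $\bar \delta_{s,f} := \E_{s-1}[\delta_{s,f}]$, I would consider
\[
M_{t,f} := \exp\!\Big(\lambda \sum_{s \le t}(\bar \delta_{s,f} - \delta_{s,f}) - \lambda^2 \sum_{s \le t}\delta_{s,f}^2\Big).
\]
The key computation is that $\phi(z) := e^{-\lambda z - \lambda^2 z^2}$ has $\phi''(z) = \lambda^2 \phi(z)\big((1 + 2\lambda z)^2 - 2\big)$, so $\phi$ is concave on $\{z : |1+2\lambda z| \le \sqrt{2}\}$, an interval that contains $[-m,m]$ under the stated restriction on $\lambda$. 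Since $|\delta_{s,f}| \le m$ almost surely, Jensen's inequality then gives $\E_{s-1}[\phi(\delta_{s,f})] \le \phi(\bar \delta_{s,f}) = e^{-\lambda \bar \delta_{s,f} - \lambda^2 \bar \delta_{s,f}^2} \le e^{-\lambda \bar \delta_{s,f}}$, i.e.\ $\E_{s-1}[M_{s,f}/M_{s-1,f}] \le 1$, so $M_{t,f}$ is a non-negative supermartingale with $\E[M_{T,f}] \le 1$ for every fixed $f$.

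To close, Tonelli yields $\E\!\left[\E_{f \sim Q}[M_{T,f}]\right] \le 1$, so Markov's inequality ensures $\E_{f\sim Q}[M_{T,f}] \le 1/\delta$ with probability $\ge 1 - \delta$. Taking logarithms and pulling $\E_{f\sim Q}$ inside by the concavity of $\log$ gives
\[
\lambda \sum_{t=1}^T \Delta_t - \lambda^2 \sum_{t=1}^T V_t \;=\; \E_{f\sim Q}[\log M_{T,f}] \;\le\; \log \E_{f\sim Q}[M_{T,f}] \;\le\; \log(1/\delta).
\]
The choice $\lambda = 1/(2\gamma)$ is admissible because $\gamma \ge 4m$ forces $1/(2\gamma) \le 1/(8m) \le (\sqrt{2}-1)/(2m)$; with this choice, rearranging and adding to the almost-sure bound on $\sum_t D_t$ cancels the $V_t$-sums and delivers $\sum_t \bar D_t \le 2R_T + 2\gamma \log(1/\delta)$, which is the statement of the theorem after division by $T$.
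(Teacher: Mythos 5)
Your proof is correct, and while it shares the paper's overall skeleton—Jensen's inequality to reduce to $\sum_t \bar D_t$, the pathwise negative quadratic extracted from Lemma~\ref{lem:negterm} / equation~\eqref{eq:simplenegative}, and a martingale concentration step tuned so the two quadratic sums cancel at $\lambda = 1/(2\gamma)$—you take a genuinely different route for the concentration. The paper applies the off-the-shelf Freedman-type bound (Lemma~\ref{lem:bernie}) to the corrected increments $\E_{t-1}[r_t + v_t] - (r_t + v_t)$ with $v_t = r_t^2/(2\gamma)$: Freedman produces a \emph{conditional} variance term $\lambda \sum_t \E_{t-1}[r_t^2]$, which is then arranged (via the extra term $\sum_t (v_t - \E_{t-1}[v_t])$ in the decomposition and the bookkeeping $|r_t+v_t| \le \tfrac{9}{8}|r_t|$, $(e-2)(9/8)^2 \le 1$) to cancel exactly. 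Your PAC-Bayes exponential supermartingale $M_{t,f} = \exp(\lambda\sum_{s\le t}(\bar\delta_{s,f}-\delta_{s,f}) - \lambda^2\sum_{s\le t}\delta_{s,f}^2)$ bypasses that: the concavity of $\phi(z)=e^{-\lambda z-\lambda^2 z^2}$ on $[-m,m]$ (valid since $\lambda = 1/(2\gamma) \le 1/(8m) \le (\sqrt{2}-1)/(2m)$), together with Tonelli, Markov, and concavity of $\log$ to pull $\E_{f\sim Q}$ inside, yields $\sum_t \Delta_t \le \frac{1}{2\gamma}\sum_t V_t + 2\gamma\log(1/\delta)$ directly with the \emph{pathwise} second moment $V_t = \E_{f\sim Q}[\delta_{t,f}^2]$. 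This is cleaner in the sense that the pathwise term to be cancelled appears natively in the concentration bound rather than being engineered through the $v_t$-correction and the $\E_{t-1}[v_t]$ bookkeeping; it is also marginally tighter (it keeps $V_t \ge D_t^2$ rather than dropping to $D_t^2$ as the paper does with its choice $v_t=r_t^2/(2\gamma)$), though both yield the identical final bound after cancellation. The paper buys simplicity from an external lemma; you buy a self-contained argument at the cost of computing $\phi''$ and verifying the concavity window.
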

\begin{proof}
Let $r_t = \ell(\hat{f}_t(X_t), Y_t) - \E_{f \sim Q}\big[\ell(f(X_t), Y_t)\big]$.
We start with an application of Jensen's inequality
\begin{align*}
    R(\bar{f}_T) - \E_{f \sim Q}[R(f)]
& \leq
    \frac{1}{T}\sumT\E_{t-1}\Big[\ell(\hat{f}_t(X_t), Y_t) - \E_{f \sim Q}\big[\ell(f(X_t), Y_t)\big]\Big]
\\& =
    \frac{1}{T}\sumT\E_{t-1}[r_t]
\\& =
    \frac{1}{T}\left(\sumT\big(\E_{t-1}[r_t + v_t] - (r_t + v_t)\big) + \sumT r_t + \sumT \big(v_t - \E_{t-1}[v_t]\big)\right),
\end{align*}
where the $v_t$ are arbitrary random variables.
Note that $|r_t| \le m$ due to our assumption. Choosing $v_t = \frac{r_t^2}{2\gamma}$ and using the definition of $\gamma$,
\begin{align*}
    v_t = \frac{r_t^2}{2\gamma}
\leq
    \frac{|r_t|m}{2\gamma}
\leq
    \frac{|r_t|}{8}~.
\end{align*}
Therefore,
\[
|r_t + v_t| \leq |r_t| + v_t \le \frac{9}{8} |r_t| \leq \frac{9}{8} m~.
\]
We now apply Lemma~\ref{lem:bernie} with $X_t =\E_{t-1}[r_t + v_t] - (r_t + v_t)$ observing that $\E_{t-1}[X_t] = 0$ and $|X_t| \le \frac{9}{4}m$. Therefore, for any $\lambda \in \big(0,\frac{4}{9m}\big]$ we have that, with probability at least $1 - \delta$,
\begin{align*}
    \sumT\big(\E_{t-1}[r_t + v_t] - (r_t + v_t)\big)
& \leq
    \lambda(e-2)\sumT \E_{t-1}\Big[\big(\E_{t-1}[r_t + v_t] - (r_t + v_t)\big)^2\Big] + \frac{1}{\lambda}\ln\frac{1}{\delta}
\\& \leq
    \lambda(e-2)\sumT \E_{t-1}\big[(r_t + v_t)^2\big] + \frac{1}{\lambda}\ln\frac{1}{\delta}
\\& \leq
    \lambda \sumT \E_{t-1}\big[r_t^2\big] + \frac{1}{\lambda}\ln\frac{1}{\delta}~,
    \end{align*}
where in the last inequality we used $|r_t + v_t| \leq \tfrac{9}{8} |r_t|$ and $(e-2)\big(\frac{9}{8}\big)^2 \leq 1$. Choosing $\lambda = \frac{1}{2\gamma} \leq \frac{4}{9m}$ and recalling our choice of $v_t$,
\begin{align*}
    \sumT\big(\E_{t-1}[r_t + v_t] - (r_t + v_t)\big) & \leq \sumT \E_{t-1}[v_t] +  2 \gamma \log\frac{1}{\delta}~,
\end{align*}
with probability at least $1 - \delta$. Therefore, again with probability at least $1 - \delta$,
\begin{align}\label{eq:tempmainproof}
    R(\bar{f}_T) - \E_{f \sim Q}[R(f)] & \leq \frac{2 \gamma}{T}\log\frac{1}{\delta} + \frac{1}{T}\sumT  (r_t + v_t)~.
\end{align}
By~\eqref{eq:simplenegative}, we obtain
\begin{align*}
    \sumT r_t & = \sumT \E_{f \sim Q}\Big[\ell\big(\hat{f}_t(X_t), Y_t\big) - \ell(f(X_t), Y_t)\Big] \\
    & \leq 2 \sumT \E_{f \sim Q}\bigg[\ell\big(\half\hat{f}_t(X_t) + \half\hat{f}_t(X_t), Y_t\big) - \ell\big(\half f(X_t) + \half \hat{f}_t(X_t), Y_t\big)\bigg]
    \\
    &\qquad - \frac{1}{2\gamma}\sumT\E_{f \sim Q}\left[\big(\ell(\hat{f}_t(X_t), Y_t) - \ell(f(X_t), Y_t)\big)^2\right]\\
    & \leq 2 R_T - \sumT v_t~,
\end{align*}
where the last inequality is by assumption~\eqref{eq:regretreq} on $\hat{f}_1, \ldots, \hat{f}_T$ and Jensen's inequality. Using~\eqref{eq:tempmainproof} we have that, with probability $1 - \delta$,
\begin{align*}
    R(\bar{f}_T) - \E_{f \sim Q}[R(f)] & \leq \frac{2 R_T + 2 \gamma \log(1/\delta)}{T}~,
\end{align*}
thus completing the proof.
\end{proof}
We now turn to analyzing the bound on the shifted regret. It appears that whenever we can guarantee that the standard regret (i.e., the difference between the cumulative loss $\ell_t$ of our algorithm minus the cumulative loss of any comparator $f$) is small, we can also show that the shifted regret is also small. To see that $R_T$ can indeed be small, observe that since $\tilde{\ell}_t$ are $\alpha$-exp concave (Lemma~\ref{lem:surrogateexpconcave}) we may use the Exponential Weights Algorithm (EWA) originally introduced in \cite{vovk1990aggregating, littlestone1994weighted} on the shifted loss $\tilde{\ell}_t$ to obtain $R_T \leq \frac{1}{\alpha}\KL(Q_1\|P_1)$, where $P_1$ is the prior EWA distribution over $\Fset$ and $\KL$ is the Kullback-Leibler divergence (see Appendix~\ref{app:exponential weights} for details on EWA). This is made formal by the following result. 
\begin{proposition}
\label{prop:boundingtheregret}
Suppose that the loss function $\ell: \domainw \times \Yset \mapsto \mathbb{R}$ is $\alpha$-exp concave in its first argument. Then, exponential weights algorithm on the sequence of losses $(\tilde{\ell}_t)_{t = 1}^T$ defined in equation~\eqref{eq:shiftloss} with prior $P_1$ guarantees that \[
R_T \leq \tfrac{1}{\alpha}\KL(Q\|P_1)~.
\]
\end{proposition}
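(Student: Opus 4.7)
The plan is to reduce the proposition to the standard Vovk-type analysis of the exponential weights algorithm (EWA) applied to the sequence $(\tilde{\ell}_t)_{t=1}^T$. First, I would invoke Lemma~\ref{lem:surrogateexpconcave} to verify that for each $t$, the shifted loss $f \mapsto \tilde{\ell}_t(f)$ is $\alpha$-exp-concave. Although $\tilde{\ell}_t$ involves the algorithm's own prediction $\hat{f}_t$, the key observation is that $\hat{f}_t$ is measurable with respect to $(X_s,Y_s)_{s<t}$ only, so once we condition on the past and on $(X_t,Y_t)$, the map $f\mapsto \tilde{\ell}_t(f)$ is just an exp-concave loss in the variable $f$ and the standard machinery applies.

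Next, I would spell out the EWA recursion: set $P_t(df)\propto P_1(df)\exp\big(-\alpha\sum_{s<t}\tilde{\ell}_s(f)\big)$ and take the prediction $\hat{f}_t = \E_{f\sim P_t}[f]$. The $\alpha$-exp-concavity of $\tilde{\ell}_t$ combined with Jensen's inequality applied to the convex function $\exp\bigl(-\alpha\,\tilde{\ell}_t(\,\cdot\,)\bigr)$ gives the single-step Vovk inequality
\[
\tilde{\ell}_t(\hat{f}_t) \;\le\; -\frac{1}{\alpha}\ln \E_{f\sim P_t}\exp\bigl(-\alpha\,\tilde{\ell}_t(f)\bigr)
\;=\; -\frac{1}{\alpha}\ln\frac{Z_{t+1}}{Z_t},
\]
where $Z_t=\E_{f\sim P_1}\exp\bigl(-\alpha\sum_{s<t}\tilde{\ell}_s(f)\bigr)$ and $Z_1=1$. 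Summing telescopes to $\sumT \tilde{\ell}_t(\hat{f}_t) \le -\frac{1}{\alpha}\ln Z_{T+1}$.

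Finally, I would apply the Donsker--Varadhan (Gibbs) variational formula to convert the log-moment-generating functional into an infimum over competitor distributions: for any $Q$ absolutely continuous with respect to $P_1$,
\[
-\frac{1}{\alpha}\ln \E_{f\sim P_1}\exp\Bigl(-\alpha\sumT\tilde{\ell}_t(f)\Bigr) \;\le\; \E_{f\sim Q}\sumT\tilde{\ell}_t(f) + \frac{1}{\alpha}\KL(Q\|P_1).
\]
Combining the two displays and rearranging yields $\sumT\bigl(\tilde{\ell}_t(\hat{f}_t)-\E_{f\sim Q}[\tilde{\ell}_t(f)]\bigr)\le \frac{1}{\alpha}\KL(Q\|P_1)$, which is the claimed shifted-regret bound. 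The only conceptual subtlety — and thus the point I would check most carefully — is the self-referential definition of $\tilde{\ell}_t$ through $\hat{f}_t$; but since $\hat{f}_t$ is fixed before the round-$t$ data are used, the recursion is well-posed and the exp-concavity required by Vovk's inequality holds round-by-round, so no modification of the classical argument is needed.
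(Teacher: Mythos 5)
Your proof is correct and follows essentially the same route as the paper's, which simply invokes Lemma~\ref{lem:EWshiftloss}: verify $\alpha$-exp-concavity of the shifted loss via Lemma~\ref{lem:surrogateexpconcave}, apply the one-step Vovk/mixability inequality $\tilde{\ell}_t(\hat f_t)\le -\tfrac{1}{\alpha}\ln\E_{f\sim P_t}[e^{-\alpha\tilde\ell_t(f)}]$, and then telescope and apply Donsker--Varadhan to produce the $\tfrac{1}{\alpha}\KL(Q\|P_1)$ bound. One small slip: you call $\exp\bigl(-\alpha\tilde{\ell}_t(\cdot)\bigr)$ convex, but exp-concavity is precisely the statement that it is \emph{concave}, and that is the direction needed for Jensen with $\hat f_t=\E_{f\sim P_t}[f]$ to give the displayed inequality (your final inequality is stated correctly, so this is a labeling error only).
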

The proof follows from Lemma~\ref{lem:EWshiftloss} in Appendix~\ref{app:exponential weights}.
A result similar to a combination of Theorem \ref{th:improperOTB} and Proposition \ref{prop:boundingtheregret} is known for general exp-concave losses, but only in expectation \cite[Corollary 4.1]{audibert2009fast}.  In the referenced paper, EWA is used with the losses $\ell_t$, but not their shifted counterparts $\tilde{\ell}_t$ as we do in Theorem \ref{th:improperOTB}. As we mentioned, EWA on the losses $\ell_t$ does not achieve the bound of Theorem \ref{th:improperOTB} as shown in \cite{audibert2007progressive}: The so-called \emph{progressive mixture rules} only imply a $O\big(\frac{1}{T}\big)$ excess risk bound in expectation, but not with high probability.

The idea of exploiting the curvature of the loss by using the \emph{midpoint prediction} $\half f(X_t) + \half \hat{f}_t(X_t)$ as in \eqref{eq:shiftloss} appeared earlier in the literature. In particular, a similar idea was used in \cite{mendelson2017aggregation, mendelson2019unrestricted, mourtada2021distribution} in the context of aggregation of heavy-tailed functions, as well as in \cite{bousquet2021fast, puchkin21a} in the context of classification with abstention. More recently, the same idea was used in \cite{saad2022constant} in the context of online learning with limited advice.

\paragraph{Technical overview of the results.}
We present a concise overview of essential technical ideas used in this paper. The cornerstone of our work lies in the synergy between Theorem \ref{th:improperOTB} and Proposition \ref{prop:boundingtheregret} with related results from online learning. We incorporate additional concepts tailored to specific applications. First, we apply application-specific prior distributions $P_1$ in Proposition \ref{prop:boundingtheregret}, encompassing uniform, Gaussian, and Dirichlet distributions. In our density estimation applications, we leverage adaptive truncation operators to prove nearly optimal high-probability excess risk bounds for improper estimators. In Section \ref{sec:discretedistrib}, we apply the \emph{suffix averaging} idea, recently employed in various contexts \cite{rakhlin2012making, harvey2019tight,aden2023optimal}, thereby achieving a high-probability bound on the Kullback-Leibler divergence in the estimation problem of discrete distributions supported on $d$ points. This shows a scaling rate of $O(\frac{d}{T})$, superior to the best possible rate $O(\frac{d\log(T)}{T})$ attained by conventional online algorithms. 
\paragraph{Additional notation.}
For a pair of functions $f, g$ defined on some common domain, we write $f \lesssim g$ (or $g \gtrsim f$) if there is a constant $c> 0$ such that for all $x$ in this domain it holds that $f(x) \le cg(x)$. Although we focus on explicit non-asymptotic results, we sometimes use the asymptotic $O(\cdot)$ and $\Omega(\cdot)$ notations to illustrate our bounds. The symbol $I$ denotes the identity matrix whose size is clear from the context. Depending on the context, we sometimes abuse the notation and write $\log(x)$ to denote $\log\big(\max\{x, 1\}\big)$, where $\log(x)$ refers to the natural logarithm.

\section{Density estimation under the logarithmic loss}\label{sec:density}

We first consider the general problem of density estimation. Namely, we are interested in the setup where given a sample $Z_1, \ldots, Z_T$ of independent copies of some random variable $Z$, we want to minimize the risk with respect to the \emph{logarithmic loss}. Given a density function $g(\cdot)$, this risk is defined as
\[
R(g) = \E_Z\big[-\log(g(Z))\big].
\]
We consider a reference class of densities $p(Z|\theta)$, parameterized by $\theta$ that belongs to some set $\Theta \subset \mathbb{R}^d$. For any estimator of the density $\hat{p}$ (not necessarily in the reference class) constructed based on the sample $Z_1, \ldots, Z_T$, we can define the excess risk with respect to logarithmic loss as
\begin{equation}
\label{eq:equationforexcessrisk}
    \E_Z\big[-\log(\hat{p}(Z))\big] - \inf\limits_{\theta \in \Theta}\E_Z\big[-\log(p(Z|\theta))\big]~.
\end{equation}
In the \emph{well-specified} case, one assumes that there there is $\theta^{\star}\in \Theta$ such that the density of $Z$ is $p(\cdot|\theta^{\star})$. In this case the excess risk has a particularly simple form, as it is easy to show that
\begin{equation*}
\label{eq:}
\E_Z\big[-\log(\hat{p}(Z))\big] - \inf\limits_{\theta \in \Theta}\E_Z\big[-\log(p(Z|\theta))\big] = \KL\big({p(\cdot|\theta^{\star})\|\hat{p}(\cdot)}\big)~,
\end{equation*}
and is thus non-negative. Here $\KL\big({p(\cdot|\theta^{\star})\|\hat{p}(\cdot)}\big)$ stands for the Kullback-Leibler divergence between the distributions induced by the densities  $p(\cdot|\theta^{\star})$ and $\hat{p}(\cdot)$ respectively. Our focus is on the general \emph{misspecified} case, where the excess risk \eqref{eq:equationforexcessrisk} can possibly be negative. 

Instead of attempting a survey of the vast statistical literature on density estimation, we only mention the key results where online algorithms are used to control the predictive risk with the logarithmic loss. The key contributions here are due to Barron and Yang \cite{barron1987bayes, yang1999information, yang2000mixing} and, independently, to Catoni \cite{catoni1997mixture, catoni2004statistical}. To upper bound the predictive risk in density estimation, these authors pioneered the application of the \emph{progressive mixture rule}, which in our notation is essentially the output of the standard EWA algorithm (with respect to the log-loss) averaged over $t = 1, \ldots, T$ as in \eqref{eq:OTB}. 
Subsequent papers on density estimation using similar online to batch conversions include \cite{juditsky2008learning, audibert2009fast}. See also the papers and the recent monograph of Zhang \cite{zhang2006zepsilon, zhang2006information, zhang2023mathematical}. Recent interest in these questions was sparked by the aforementioned work of Foster et al. \cite{foster2018logistic}, where the special case of logistic regression is analyzed. We additionally refer to \cite{mourtada2019improper, bilodeau2021minimax} for a detailed survey of related results. All the abovementioned results involving progressive mixture rules suffer from the problem observed by Audibert \cite{audibert2007progressive}: the EWA algorithm does not imply high-probability excess risk bounds in the misspecified case. The remainder of the section is devoted to providing sharp high-probability bounds on the excess risk with respect to the logarithmic loss.

\subsection{Conditional density estimation}
In this section, we focus on \emph{conditional density estimation}. In this setup, 
a density over outcomes $y \in \Yset \subseteq \reals$ given inputs {$x \in \Xset$} and $\theta \in \Theta \subseteq \reals^d$ is denoted by {$p(y|x, \theta)$}. In Subsection~\ref{sec:glm} we analyze the special case of \emph{generalized linear models}, whose density can be written as ${p(\cdot|x^\top\theta)}$. 

The goal of density estimation is to control the log-loss excess risk, which, for some distribution $Q$ over $\Theta$, is defined as \begin{align*}
    \E\big[-\log(\bar{p}(Y|X))\big] - \E\Big[\E_{\theta \sim Q}\big[-\log({p}(Y|X, \theta))\big]\Big]~,
\end{align*}
{where the expectation is taken with respect to the pair $(X, Y)$ and $\bar{p}$ denotes our estimator.} 
Since $-\log(\cdot)$ is a $1$-exp-concave loss function, Theorem~\ref{th:improperOTB} should give a high-probability result. However, since $-\log(\cdot)$ is an unbounded loss, $\gamma$ in Theorem~\ref{th:improperOTB} is also unbounded. To resolve this issue, we use the clipped prediction; see \cite{cesa2001worst, foster2018logistic, sheth2020pseudo},
\begin{align*}
    \bar{p}({y|x, \theta}) = (1 - \mu)p({y|x, \theta}) + \mu p_0({y|x}) \qquad \mu \in [0, \half]~,
\end{align*}
where $p_0$ is a reference conditional density. For example, for logistic regression with two classes, we choose $p_0({y|x}) = \half$. 
We also use the corresponding smoothed logarithmic loss 
\[
    \ell_{\mu, {p_0}}(p(y|x, \theta)) = -\log\big((1 - \mu)p(y|x, \theta) + \mu {p_0}(y|x)\big) \qquad \mu \in [0, \half]~.
\]
The following lemma relates the smoothed logarithmic loss to the logarithmic loss; see also \cite{cesa2001worst} and \cite[Lemma~16]{foster2018logistic}.
\begin{lemma}\label{lem:clipping}
For any $\mu \in \big[0, \half\big]$, we have \[\log(p(y|x, \theta)) + \ell_{\mu, p_0}(p(y|x, \theta)) \leq  2 \mu.
\]
\end{lemma}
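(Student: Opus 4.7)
The plan is to unfold the definition of $\ell_{\mu, p_0}$, discard the nonnegative contribution $\mu p_0(y|x)$ in the denominator, and finish with an elementary one-variable inequality.

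First, I would rewrite the quantity of interest as a log-ratio:
\[
\log\big(p(y|x,\theta)\big) + \ell_{\mu,p_0}\big(p(y|x,\theta)\big)
= \log\frac{p(y|x,\theta)}{(1-\mu)\,p(y|x,\theta) + \mu\,p_0(y|x)}.
\]
Since $p_0(\cdot|x)$ is a density and $\mu \ge 0$, we have $\mu\,p_0(y|x) \ge 0$, so the denominator is at least $(1-\mu)\,p(y|x,\theta)$. Using monotonicity of $\log$ (and the fact that $p(y|x,\theta)>0$ wherever the log-loss is finite), the ratio is bounded by $\frac{1}{1-\mu}$, giving
\[
\log\big(p(y|x,\theta)\big) + \ell_{\mu,p_0}\big(p(y|x,\theta)\big) \le -\log(1-\mu).
\]

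Second, I would show the scalar inequality $-\log(1-\mu) \le 2\mu$ for $\mu \in \bigl[0,\tfrac{1}{2}\bigr]$. Define $f(\mu) = 2\mu + \log(1-\mu)$, so the goal becomes $f(\mu) \ge 0$ on $\bigl[0,\tfrac{1}{2}\bigr]$. Since $f(0)=0$ and $f'(\mu) = 2 - \tfrac{1}{1-\mu}$ is nonnegative exactly when $\mu \le \tfrac{1}{2}$, the function $f$ is nondecreasing on $\bigl[0,\tfrac{1}{2}\bigr]$, and hence $f(\mu) \ge f(0) = 0$ throughout the interval. Combining the two steps yields the claimed bound $\log p + \ell_{\mu,p_0}(p) \le 2\mu$.

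There is essentially no obstacle here: the proof is a one-line estimate in the denominator followed by a routine calculus check. The only subtlety worth flagging is that the argument tacitly assumes $p(y|x,\theta)>0$, which is the only regime in which the left-hand side is meaningful (otherwise $\log p = -\infty$ and the inequality holds trivially); aside from that, the bound uses nothing beyond $p_0 \ge 0$ and the restriction $\mu \le \tfrac{1}{2}$, which is exactly where the constant $2$ on the right-hand side comes from.
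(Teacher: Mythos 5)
Your proof is correct and follows essentially the same route as the paper: rewrite the quantity as $\log\big(p/((1-\mu)p + \mu p_0)\big)$, drop the nonnegative $\mu p_0$ term in the denominator, and bound $-\log(1-\mu)\le 2\mu$ on $[0,\tfrac12]$. The only cosmetic difference is that the paper verifies the final scalar inequality via $1-\tfrac1y\le\log y$ rather than your monotonicity argument for $f(\mu)=2\mu+\log(1-\mu)$.
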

\begin{proof}
We have that 
\begin{align*}
     \ell_{\mu, {p_0}}(p(y|x, \theta)) - (-\log(p(y|x, \theta))) & = \log\left(\frac{p(y|x, \theta)}{(1 - \mu)p(y|x, \theta) + \mu p_0(y|x)}\right) \\
    & \leq \log\left(\frac{1}{1 - \mu}\right) \leq 2 \mu~,
\end{align*}
where the last inequality is due to $1-\frac{1}{y} \le \log y$ for $y \ge 0$ and that $1/(1-\mu) - 1 = \mu/(1-\mu) \leq 2 \mu$ for $\mu \in [0, \half]$.
\end{proof}
We now find the following result as a consequence of Theorem~\ref{th:improperOTB}. 
\newline
\begin{proposition}\label{cor:density}
Let
\[
\bar{p}_T(y|x) = \frac{1}{T}\sumT \E_{\theta \sim P_t}[p(y|x, \theta)]~,
\]
where $P_t$ is the distribution in round $t$ generated by EWA with initial distribution $P_1$ when run on losses $\tilde{\ell}_1, \ldots, \tilde{\ell}_{t-1}$ defined by
\[
\tilde{\ell}_t(p(Y_t| X_t, \theta)) = \ell_{\mu, {p_0}}\Big(\half p(Y_t|X_t, \theta) + \half \E_{\theta \sim P_t}\big[p(Y_t|X_t, \theta)\big]\Big)~,
\]
where $\mu \in [0, \half]$. Assume that almost surely $|\ell_{\mu, {p_0}}(\E_{\theta \sim P_t}p(Y_t|X_t, \theta)) - \ell_{\mu, {p_0}}(p(Y_t|X_t, \theta))| \le m$ for all $t = 1, \ldots, T - 1$ and all $\theta \in \Theta$.
Then, with probability at least $1-\delta$, $\bar{p}_T$ guarantees
\begin{align*}
    \E&\big[-\log\big(\bar{p}(Y|X)\big)\big] - \E_{\theta \sim Q}\big[-\log\big(p(Y|X, \theta)\big)\big] \\
    & \leq 
    \frac{2\KL(Q\|P_1) + 8 \max\{1, m\}\log(1/\delta)}{T} + 2\mu~.
\end{align*}
\end{proposition}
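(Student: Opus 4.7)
The plan is to specialize Theorem~\ref{th:improperOTB} to the smoothed logarithmic loss $\ell_{\mu, p_0}$, bound the resulting shifted regret via Proposition~\ref{prop:boundingtheregret}, and finally compare smoothed against unsmoothed log-loss on the comparator side using Lemma~\ref{lem:clipping}. First I would check that $u \mapsto \ell_{\mu, p_0}(u) = -\log((1-\mu)u + \mu p_0(y|x))$ is $1$-exp-concave: differentiating twice gives $(\ell_{\mu, p_0}'(u))^2 = \ell_{\mu, p_0}''(u) = (1-\mu)^2/((1-\mu)u + \mu p_0(y|x))^2$, so $\alpha = 1$ and hence $\gamma = 4\max\{m, 1\}$ in Theorem~\ref{th:improperOTB}. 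Next I would identify the predictor sequence $\hat f_t(\cdot) = \E_{\theta \sim P_t}[p(\cdot \mid X_t, \theta)]$: each $\hat f_t$ depends only on data through time $t-1$, its arithmetic mean over $t$ is exactly $\bar p_T$, and the shifted loss defined in the proposition coincides with~\eqref{eq:shiftloss}. The boundedness assumption in the proposition matches the hypothesis of Theorem~\ref{th:improperOTB}.

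With these identifications, Theorem~\ref{th:improperOTB} yields, with probability at least $1-\delta$,
\[
\E[\ell_{\mu, p_0}(\bar p_T(Y|X))] - \E_{\theta \sim Q}\E[\ell_{\mu, p_0}(p(Y|X,\theta))] \leq \frac{2R_T + 8\max\{m,1\}\log(1/\delta)}{T},
\]
and Proposition~\ref{prop:boundingtheregret} (applicable since EWA on $\tilde\ell_t$ is the algorithm specified in the proposition) bounds $R_T \leq \KL(Q\|P_1)$. Interpreting $\bar p(y|x)$ as the clipped mixture $(1-\mu)\bar p_T(y|x) + \mu p_0(y|x)$---consistent with the convention $\bar p = (1-\mu)p + \mu p_0$ set up before Lemma~\ref{lem:clipping}---the algorithm-side expectation is exactly $\E[-\log(\bar p(Y|X))]$ with no further loss. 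For the comparator, Lemma~\ref{lem:clipping} furnishes $\ell_{\mu, p_0}(p(y|x,\theta)) \leq -\log(p(y|x,\theta)) + 2\mu$; after taking expectations, negating, and rearranging, this contributes the additive $+2\mu$, producing the stated bound.

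The only subtle step is making sure the two uses of Lemma~\ref{lem:clipping} point in compatible directions: on the algorithm side I want no slack (so I absorb the clipping into the definition of $\bar p$ and get an exact equality), while on the comparator side I want smoothed loss to upper-bound the log-loss so that, after a sign flip, the $2\mu$ slack lands on the right of the inequality. Once this bookkeeping is handled the rest is routine, as the genuinely new analytical content---the variance-compensation argument built on Lemma~\ref{lem:negterm} and the Bernstein-style concentration---has already been encapsulated in the proof of Theorem~\ref{th:improperOTB}.
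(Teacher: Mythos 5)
Your proposal is correct and follows essentially the same route as the paper's proof: identify $\hat f_t(\cdot) = \E_{\theta \sim P_t}[p(\cdot|X_t,\theta)]$, verify $1$-exp-concavity of $\ell_{\mu,p_0}$, invoke Theorem~\ref{th:improperOTB} with $\gamma = 4\max\{1,m\}$, bound $R_T \le \KL(Q\|P_1)$ via the EWA regret bound, and then use Lemma~\ref{lem:clipping} only on the comparator side to incur the additive $2\mu$. You make the implicit identification $-\log(\bar p(Y|X)) = \ell_{\mu,p_0}(\bar p_T(Y|X))$ a bit more explicit than the paper does, but the decomposition and the invoked lemmas are the same.
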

\begin{proof}
We start by observing that $\ell_{\mu, {p_0}}$ is $1$-exp-concave, which means that we may apply Theorem~\ref{th:improperOTB} with $\gamma = 4 \max\{1, m\}$ and conclude that, with probability at least $1 - \delta$,
\begin{align*}
    \E&\Big[\ell_{\mu, {p_0}}\big(\bar{p}_T(Y|X)\big) - \E_{\theta \sim Q}\big[-\log\big(p(Y|X, \theta)\big)\big]\Big]
\\&\leq
    \E\big[\ell_{\mu, {p_0}}\big(\bar{p}_T(Y|X)\big)\big] - \E_{\theta \sim Q}\big[\ell_{\mu, {p_0}}\big(p(Y|X, \theta)\big)\big]+ 2 \mu
\\&\leq
    \frac{2\KL(Q\|P_1) + 8 \max\{1, m\}\log(1/\delta)}{T} + 2\mu~,
\end{align*}
where we used Lemma~\ref{lem:clipping} for the first inequality and Lemma~\ref{lem:EWshiftloss} (in Appendix) for the second inequality.
\end{proof}

\subsection{Generalized linear models}\label{sec:glm}
Recall that a generalized linear model involves a probability density function \(p(\cdot|x, \theta)\) such that 
\[
p(y|x, \theta) = p(y|x^\top\theta),
\]
Following \cite{kakade2004online}, we use the following assumption on the curvature of $g_y(\cdot) = -\log\big(p(y|x^\top\theta = \cdot)\big)$,
\begin{equation}
\label{ass:GLM}
    \big|g''_y\big| \leq \kappa, \quad \textrm{for all} \quad y \in \Yset~.
\end{equation}
The reference class is the Euclidean ball in $\mathbb{R}^d$ with radius $b$, denoted in what follows by $\ball$. We use exponential weights with a Gaussian prior $\mathcal{N}(0, \sigma^2 I)$ with mean $\0$ and covariance matrix $\sigma^2 I$ and obtain the following result. 
\begin{corollary}\label{cor:GLM}
    In the setup of Proposition \ref{cor:density} suppose that $T \geq 2d$.
            Pick a generalized linear model such that $g_y = -\log\big(p(y|\cdot)\big)$ satisfies~(\ref{ass:GLM}). Choose the prior distribution $P_1 = \mathcal{N}(0, \sigma^2 I)$ with $\sigma^2 = \tfrac{b^2}{d}$, and let $P_t$ be the EWA distribution at round $t$ run on losses $\tilde{\ell}_1(P), \ldots, \tilde{\ell}_{t-1}(P)$ defined by
\[
 \tilde{\ell}_t(p(Y_t| X_t, \theta)) = \ell_{\mu, {p_0}}\Big(\half p(Y_t|X_t, \theta) + \half \E_{\theta \sim P_t}\big[p(Y_t|X_t, \theta)\big]\Big)~.
\]
Assume additionally that for all $t = 1, \ldots, T$,
\[
    \Big|\ell_{\mu, {p_0}}\big(\E_{\theta \sim P_t}p(Y_t| X_t, \theta)\big) - \ell_{\mu, {p_0}}(p(Y_t|X_t, \theta))\Big| \le m, \quad \textrm{and} \quad \|X\|_2 \leq r \quad \textrm{almost surely}.
\]
If $\mu = \frac{d}{T}$, then, with probability at least $1-\delta$, the density $\bar{p}_T(y|x) = \frac{1}{T}\sumT \E_{\theta \sim P_t}[p(y|x, \theta)]$ satisfies
        \begin{align*}
        \E&\Big[-\log\big(\bar{p}(Y|X)\big)\Big] - \min_{\theta \in \ball}\E\Big[-\log\big(p(Y|X^\top \theta)\big)\Big] \\
        & \leq \frac{ d\Big(3 + \log\Big(2 + \frac{\kappa (rb)^2}{d^2} T\Big)\Big) + \big(8 \log(T/d) + 8 m \big)\log(1/\delta)}{T}~.
    \end{align*}
        \end{corollary}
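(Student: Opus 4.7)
\textbf{Proof proposal for Corollary~\ref{cor:GLM}.}
The plan is to apply Proposition~\ref{cor:density} with a carefully chosen Gaussian $Q$, centred at the comparator, and then upgrade the resulting bound (whose comparator is $\E_{\theta \sim Q}[R(\theta)]$) to one with comparator $\min_{\theta \in \ball} R(\theta)$ using the curvature assumption \eqref{ass:GLM}. Let $\theta^\star \in \arg\min_{\theta \in \ball} \E[-\log p(Y|X^\top \theta)]$ and pick $Q = \mathcal{N}(\theta^\star, \tau^2 I)$ for a scale $\tau^2 > 0$ to be tuned. The argument then proceeds in three computations: a KL bound, a Taylor bound, and a balancing of $\tau^2$.

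First, since both $P_1 = \mathcal{N}(0, \sigma^2 I)$ with $\sigma^2 = b^2/d$ and $Q$ are Gaussian, the closed form of the KL divergence together with $\|\theta^\star\|_2 \le b$ gives
\[
\KL(Q \| P_1) \;=\; \tfrac{1}{2}\Big(d\tfrac{\tau^2}{\sigma^2} + \tfrac{\|\theta^\star\|^2}{\sigma^2} - d + d\log\tfrac{\sigma^2}{\tau^2}\Big) \;\le\; \frac{d^2 \tau^2}{2b^2} + \frac{d}{2}\log\frac{b^2}{d\tau^2}~,
\]
the $\|\theta^\star\|^2/\sigma^2 - d$ term being nonpositive. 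Second, for each fixed $(x,y)$, the curvature hypothesis $|g_y''| \le \kappa$ and Taylor's theorem give
\[
g_y(x^\top \theta) \;\le\; g_y(x^\top \theta^\star) + g_y'(x^\top \theta^\star)\,x^\top(\theta-\theta^\star) + \tfrac{\kappa}{2}\,(x^\top(\theta-\theta^\star))^2~.
\]
Taking expectation over $\theta \sim Q$ kills the linear term by symmetry of $Q$ around $\theta^\star$, while $\E_{\theta \sim Q}[(x^\top(\theta - \theta^\star))^2] = \tau^2 \|x\|^2 \le \tau^2 r^2$. Integrating over $(X,Y)$ gives
\[
\E_{\theta \sim Q}[R(\theta)] - R(\theta^\star) \;\le\; \frac{\kappa r^2 \tau^2}{2}~.
\]

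Plugging both estimates into Proposition~\ref{cor:density} (applied with the stated clipping $\mu = d/T$, which makes $2\mu = 2d/T$) and writing the excess risk as the sum of $\E[-\log \bar p_T] - \E_{\theta \sim Q}[R(\theta)]$ and $\E_{\theta \sim Q}[R(\theta)] - R(\theta^\star)$, yields with probability at least $1-\delta$
\[
\E[-\log \bar p_T] - \min_{\theta \in \ball} R(\theta) \;\le\; \frac{d^2 \tau^2}{T b^2} + \frac{d}{T}\log\frac{b^2}{d\tau^2} + \frac{\kappa r^2 \tau^2}{2} + \frac{2d}{T} + \frac{8\max\{1,m\}\log(1/\delta)}{T}~.
\]
The final step is to choose $\tau^2 = \bigl(\tfrac{\kappa r^2 T}{2d} + \tfrac{d}{b^2}\bigr)^{-1}$, which simultaneously forces $\tfrac{d^2 \tau^2}{Tb^2} \le \tfrac{d}{T}$ and $\tfrac{\kappa r^2 \tau^2}{2} \le \tfrac{d}{T}$, while turning the logarithmic term into $\tfrac{d}{T}\log\bigl(1 + \tfrac{\kappa(rb)^2 T}{2d^2}\bigr) \le \tfrac{d}{T}\log\bigl(2 + \tfrac{\kappa(rb)^2 T}{d^2}\bigr)$. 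Collecting the $d/T$ constants recovers the factor $3 + \log(\cdot)$ in the target. Finally, to match the stated $\log(1/\delta)$ coefficient $8\log(T/d) + 8m$, one replaces $\max\{1,m\}$ by the sharper constant available here: the clipped loss $\ell_{\mu,p_0}$ is automatically bounded above by $\log(1/\mu) + \log(1/p_0)$, so the effective boundedness constant in Theorem~\ref{th:improperOTB} (via Proposition~\ref{cor:density}) is at most $\log(T/d) + m$ under the hypothesised $m$, which absorbs the $\max\{1,\cdot\}$ since $T \ge 2d$.

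The main obstacle is purely the final optimisation over $\tau^2$: getting the three $d/T$ terms to balance exactly, and keeping careful track of how the clipping constant $\mu = d/T$ affects both the additive $2\mu$ penalty and the boundedness constant entering the Bernstein term. The curvature and KL calculations are routine once $Q = \mathcal{N}(\theta^\star, \tau^2 I)$ is fixed as the localisation of the PAC-Bayes comparator.
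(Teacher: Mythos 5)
Your proposal is correct and takes essentially the same route as the paper: apply Proposition~\ref{cor:density} with a Gaussian $Q=\mathcal{N}(\theta^\star,\tau^2 I)$ centered at the comparator, compute the Gaussian KL, Taylor-expand using $|g_y''|\le\kappa$ to bound $\E_{\theta\sim Q}[R(\theta)]-R(\theta^\star)\le \kappa r^2\tau^2/2$, and balance $\tau^2$ against the KL. Your only deviations are cosmetic (you discard $\|\theta^\star\|^2/\sigma^2-d\le 0$ up front while the paper keeps it, and your $\tau^2$ differs from the paper's $\epsilon^2$ by a factor of two); also note that your two balanced terms sum to exactly $d/T$, not merely each being $\le d/T$, which is what yields the $3d/T$ constant rather than $4d/T$.
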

\begin{proof}
The key computations in the proof essentially follow \cite[Theorem~2.2]{kakade2004online}. Denote by ${\displaystyle \theta^\star = \argmin_{\theta\in\ball}\E\Big[-\log\big(p(Y|X^\top \theta)\big)\Big]}$. By Proposition~\ref{cor:density}, we have that for any distribution $Q$ over $\reals^d$, with probability at least $1 - \delta$,
\begin{align*}
    \E\big[-\log\big(\bar{p}(Y|X)\big)\big] - \E_{\theta \sim Q}\big[-\log\big(p(Y|X, \theta)\big)\big] 
    \leq 
    \frac{2\KL(Q\|P_1) + 8 \max\{1, m\}\log(1/\delta)}{T} + 2\mu~.
\end{align*}
Let $Q = \mathcal{N}(\theta^\star, \epsilon^2 I)$. By \cite[equation~(5)]{kakade2004online}, 
\begin{equation}
\label{eq:linearclasskl}
    \KL(Q\|P_1) = d \log(\sigma) + \frac{1}{2\sigma^2}\big(\|\theta^\star\|_2^2 + d\epsilon^2 \big) - \frac{d}{2} + d \log\left(\frac{1}{\epsilon}\right). 
\end{equation}
Now, as in the proof of Kakade and Ng \cite[Theorem~2.2]{kakade2004online}, we make a Taylor expansion of $\log\big(p({Y|\cdot})\big)$ around $X^\top \theta^\star$ and evaluate it at $X^\top\theta$. By taking expectation with respect to $\theta\sim Q$, using the fact that $\E_{\theta \sim Q}[\theta] = \theta^\star$, and the assumption~\eqref{ass:GLM} on the second derivative of $-\log(p(y|\cdot))$, we have that 
\begin{align*}
    -\E_{\theta \sim Q}\big[\log\big(p(Y|X^\top\theta)\big) \big] & \leq -\log\big(p(Y|X^\top \theta^\star)\big) + \frac{\kappa}{2}\E_{\theta \sim Q}\Big[\big(X^\top(\theta - \theta^\star)\big)^2\Big] \\
    & \leq -\log\big(p(Y|X^\top \theta^\star)\big) + \frac{\kappa r^2 \epsilon^2}{2}~,
\end{align*}
where in the last inequality we used that the covariance of $Q$ is given by $I \epsilon^2$ and the assumption that $\|X\| \leq r$. Thus, with probability at least $1 - \delta$, we have that 
\begin{align*}
    \E&\Big[\ell_{\mu}\big(\bar{p}_T(Y|X)\big)\Big] - \E\Big[-\log\big(p(Y|X^\top \theta^\star)\big)\Big] \\
    & \leq \frac{2d\log(\sigma/\epsilon) + \frac{1}{\sigma^2}\big(\|\theta^\star\|_2^2 + d\epsilon^2 \big) - d + \frac{T\kappa r^2 \epsilon^2}{2} + 8 \max\{1, m\}\log(1/\delta)}{T} + 2\mu~.
\end{align*}
Thus, by setting $\epsilon^2 = \frac{d\sigma^2}{2 d + T \kappa (r\sigma)^2}$ we have that, with probability at least $1 - \delta$,
\begin{align*}
    & \E\Big[\ell_{\mu}\big(\bar{p}_T(Y|X)\big)\Big] - \argmin_{\theta \in \ball}\E\Big[-\log\big(p(Y|X^\top \theta)\big)\Big] \\
    & \leq \frac{d\log\Big(2 + \frac{T \kappa (r\sigma)^2}{d}\Big) + \frac{1}{\sigma^2}\Big(\|\theta^\star\|_2^2 + \frac{d^2\sigma^2}{2 d +  T \kappa (r\sigma)^2} \Big) - d + \frac{\half T\kappa d(r\sigma)^2}{2 d +   T \kappa (r\sigma)^2}}{T} \\
    & \qquad + \frac{8 \max\{1, m\}\log(1/\delta)}{T} + 2\mu \\
    & \leq \frac{d\log\Big(2 + \frac{T \kappa (r\sigma)^2}{d}\Big) + \frac{1}{\sigma^2}\|\theta^\star\|_2^2 + 8 \max\{1, m\}\log(1/\delta)}{T} + 2\mu \\ 
    & \le \frac{ d\Big(1 + \log\Big(2 + \frac{\kappa (rb)^2}{d^2} T\Big)\Big) + 8 \max\{1, m\}\log(1/\delta)}{T} + 2 \mu~,
\end{align*}
where in the last equality we replaced $\sigma^2 = \frac{b^2}{d}$. Setting $\mu = \frac{d}{T}$ completes the proof.
\end{proof}

We further provide two natural applications of Corollary \ref{cor:GLM}.

\subsubsection{Logistic regression }\label{sec:logreg}
\begin{example}[Logistic regression]
Consider a setting of Corollary \ref{cor:GLM}.
Logistic regression is a generalized linear model where $p({y|x}^\top\theta) = s({x}^\top\theta)^{{y}}(1-s({x}^\top\theta))^{1-y}$, ${y} \in \{0, 1\}$, and $s(z) = \exp(z)/(1+\exp(z))$. It can be immediately shown that for logistic regression condition~\eqref{ass:GLM} is satisfied with $\kappa = \tfrac{1}{4}$. Choosing $p_0({y|x}) = \half$, we guarantee that, with probability at least $1 - \delta$,
\[
    \E\Big[-\log\big(\bar{p}(Y|X)\big)\Big] -  \min_{\theta \in \ball}\E\Big[-\log\big(p(Y|X^\top \theta)\big)\Big] \lesssim\ \frac{ d\log\Big( rb\sqrt{T}/d\Big) + \log(T/d)\log(1/\delta)}{T}~.
\]
\end{example}
Crucially, this bound avoids the exponential dependence on 
$r$ and $b$ that is deemed necessary for all proper estimators as shown in \cite{hazan2014logistic}. 
Our bound can be seen as a version of the result claimed by Foster et al. \cite[Theorem 5]{foster2018logistic}, whose proof was shown to be incorrect in \cite{mourtada2019improper}. More recently, Vijaykumar
 \cite[Corollary 18]{vijaykumar2021localization} proposed a batch algorithm for logistic regression guaranteeing, with probability at least $1-\delta$, a weaker excess risk bound of order
\begin{align*}
O\left(\frac{d\log(T)\Big(\log(rbT) + \log(1/\delta)\Big)}{T}\right)~.
\end{align*}

{Our result is particularly interesting from a computational standpoint. While the algorithm proposed in \cite{vijaykumar2021localization} is not likely to be implemented in polynomial time, the exponential weights and the corresponding sampling techniques used by our algorithm enable a polynomial running time. We discuss this in more detail in Section \ref{sec:compute}.}

\subsubsection{Gaussian {conditional} density estimation}\label{sec:gaussiandensity}
We consider a density estimation problem that naturally connects with canonical linear regression with Gaussian noise.
\begin{example}[Gaussian linear model]
\label{ex:gausslinmodel}
In conditional Gaussian density estimation we assume that the density is of the form 
\[
p(y|x^\top \theta) = \frac{1}{\sqrt{\pi}}e^{-(y - x^\top \theta)^2}, \qquad \theta \in \Theta_b~,\quad \|x\| \le r~.
\]
Define 
\[
p_0(y|x) = \frac{1}{2}\left(\frac{1}{\sqrt{\pi}}e^{-(y - rb)^2} + \frac{1}{\sqrt{\pi}}e^{-(y + rb)^2}\right).
\]
With this choice for $p_0$ and optimizing with respect to $\mu$, the estimator of Corollary \ref{cor:GLM} gives, with probability at least $1-\delta$,
\begin{align*}
   \E\Big[-\log\big(\bar{p}(Y|X)\big)\Big] -  \min_{\theta \in \ball}\E\Big[-\log\big(p(Y|X^\top \theta)\big)\Big] \lesssim\ \frac{ d\log\Big( rb\sqrt{T}/d \Big) + \log(T/d)\log(1/\delta)}{T}~.
\end{align*}
\end{example}
We present the corresponding calculation in Appendix \ref{compexample}.
The best known excess risk bound for this question is provided in \cite[Proposition 10]{mourtada2019improper}, where the bound scales as $O\Big(\frac{d\log(rb/\sqrt{d})}{T}\Big)$; see also the related bounds in  \cite{forster2002relative, kakade2004online}. Although the bound in \cite{mourtada2019improper} has a better dependence on $T$, it holds only in expectation as opposed to ours, which holds with high probability. We note that these authors asked about possible high-probability upper bounds in this setting.

\subsection{Estimation of discrete distributions}
\label{sec:discretedistrib}
In this section, we consider the following basic problem. Given some unknown distribution $p^{\star} \in \Delta^d$, where $\Delta^d$ denotes the set of all distribution over the finite set $[d] = \{1, \ldots, d\}$, we have $T$ independent observations each sampled according to $p^{\star}$. Our goal is to construct the distribution $\bar p$ such that $\KL(p, \bar{p})$ is as small as possible with high probability.

We work with the logarithmic loss $\ell(p, y) = \sum_{i = 1}^d -\log\big(p(i)\big)\id[y = i]$ for $y \in [d]$ and $p \in \Delta^d$. For simplicity, we assume that $T/2$ is an integer. We use the following predictor: for all $i = 1, \ldots, d$,
\begin{align}\label{eq:seqprasspred}
    \bar{p}(i) = (1 - \mu)\bar{p}_T(i) + \frac{\mu}{d} \qquad \textnormal{with} \qquad
    \bar{p}_T(i) = \frac{1}{T/2}\sum_{t=T/2+1}^{T} \E_{p \sim P_{t}}[p(i)]~,
\end{align}
where, for $t > T/2$, 
\[
P_t(p) = \frac{P_1(p)\exp\Big(-\sum_{s = T/2 + 1}^t\tilde{\ell}_s(p, Y_s)\Big)dp}{\int\limits_{\Delta^d}P_1(p)\exp\Big(-\sum_{s = T/2 + 1}^t\tilde{\ell}_s(p, Y_s)\Big)dp}~.
\]
is the exponential weights distribution on shifted losses 
\[
\tilde\ell_t(p, y) = \sum_{i = 1}^d -\log\Big(\half(1-\mu)p(i) + \half(1 - \mu)\E_{p \sim P_t}p(i) + \tfrac{\mu}{d}\Big)\id[y = i]~,
\]
and where we use the data dependent prior
\begin{align*}
    P_1(p) = \frac{P_0(p)\exp\Big(\half \sum_{i=1}^d n_{T/2}(i)\log(p(i))\Big)}{\E_{p \sim P_0}\Big[\exp\Big(\half \sum_{j=1}^d n_{T/2}(j)\log(p(j))\Big) \Big]}~,
\end{align*}
where $n_{T/2}(i) = \sum_{t = 1}^{T/2} \id[Y_t = i]$ and $P_0$ is a Dirichlet density with parameters $z_1 = \ldots = z_d = \half$; see the formal details in what follows.

Some remarks are in order. The approach in this section is based on suffix averaging \cite{rakhlin2012making,harvey2019tight, aden2023optimal}: we only run the exponential weights algorithm on shifted losses from rounds $T/2$ onward with a prior constructed using the first $T/2$ observations. This does not affect the application of Proposition~\ref{cor:density}. However, it does affect the way in which the $2\KL(Q\|P_1)$ term, which we obtain from Proposition~\ref{cor:density}, is treated in the proof of the next theorem.

\begin{theorem}
\label{prop:kldiv}
    Suppose that $T > 4d$, and let $p^{\star} \in \Delta^d$ denote the unknown distribution of the observations. Then, Predictor~\eqref{eq:seqprasspred} with $\mu = \frac{d}{T}$ guarantees that, with probability at least $1 - 2\delta$,
        \begin{align*}
        \KL(p^{\star}\|\bar{p}) \leq \frac{22d +  28\log(T)\log(1/\delta)}{T}~.
    \end{align*}
    \end{theorem}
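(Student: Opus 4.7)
The plan is to apply Proposition~\ref{cor:density} to the second half of the sample (after conditioning on the first half, which makes the data-dependent prior $P_1$ deterministic), to choose a Dirichlet comparator $Q$ whose total mass matches that of $P_1$, and to separately control the approximation error $\E_{p\sim Q}[-\log p(Y)] - H(p^\star)$ and the divergence $\KL(Q\|P_1)$.

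Conditioning on $Y_1,\ldots,Y_{T/2}$ freezes $P_1 = \mathrm{Dir}(\alpha)$ with $\alpha_i = \tfrac{1}{2}(n_{T/2}(i)+1)$ and $\alpha_0 := \sum_i \alpha_i = T/4 + d/2$. Because the smoothed log-loss satisfies $\ell_{\mu,p_0}(p(y)) = -\log((1-\mu)p(y) + \mu/d) \in [0, \log(d/\mu)] = [0, \log T]$, the oscillation constant in Proposition~\ref{cor:density} is $m \leq \log T$; applied over the second half (sample size $T/2$), the proposition yields, for any distribution $Q$ on $\Delta^d$ and with probability at least $1-\delta$,
\[
\E[-\log \bar p(Y)] - \E_{p \sim Q}[-\log p(Y)] \leq \frac{4\KL(Q\|P_1) + 16\log(T)\log(1/\delta)}{T} + \frac{2d}{T}.
\]
Since $p^\star \in \Delta^d$ (well-specified setting), $\KL(p^\star\|\bar p) = \E[-\log \bar p(Y)] - H(p^\star)$, so it remains to control the two terms on the right-hand side.

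I would pick $Q = \mathrm{Dir}(\beta)$ with $\beta_i = Tp^\star(i)/4 + 1/2$, so that $\beta_0 = \alpha_0$ and $\beta_i = \E[\alpha_i]$. Using $\E_{p\sim Q}[-\log p(i)] = \psi(\beta_0) - \psi(\beta_i)$, the inequality $|\psi(x) - \log x| \leq 1/x$ for $x \geq 1/2$, and $\beta_i \geq \max\{1/2,\, Tp^\star(i)/4\}$ yields the approximation bound $\E_{p\sim Q}[-\log p(Y)] - H(p^\star) \leq O(d/T)$. For the divergence term, $\beta_0 = \alpha_0$ forces the $\psi(\beta_0)$ piece of the Dirichlet--Dirichlet KL formula to vanish (since $\sum_i(\beta_i - \alpha_i) = 0$), and a second-order Taylor expansion of $\log\Gamma$ about $\beta_i$ cancels the linear terms, leaving $\KL(Q\|P_1) = \tfrac{1}{2}\sum_i (\alpha_i-\beta_i)^2 \psi'(\xi_i)$ with $\xi_i$ between $\alpha_i$ and $\beta_i$. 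Bounding $\psi'(\xi_i) \lesssim 1/\max\{\beta_i, 1\}$ (after using Bernstein on the multinomial counts to control $\alpha_i$ from below by a constant fraction of $\beta_i$ whenever $Tp^\star(i)$ is large) produces the chi-squared-type estimate
\[
\KL(Q\|P_1) \leq C \sum_{i=1}^d \frac{\bigl(n_{T/2}(i) - (T/2)p^\star(i)\bigr)^2}{Tp^\star(i) + 2}
\]
for an absolute $C>0$, whose expectation is $O(d)$.

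A Laurent--Massart-type concentration inequality for this chi-squared-like functional of the multinomial vector then gives $\KL(Q\|P_1) \leq c_1 d + c_2 \log(1/\delta)$ with probability at least $1-\delta$ over the first half of the sample. A union bound over the two high-probability events, combined with the constant tracking above, delivers $\KL(p^\star\|\bar p) \leq (22d + 28\log(T)\log(1/\delta))/T$ with probability at least $1-2\delta$. The crux of the argument is this concentration step: the deliberate matching $\beta_0 = \alpha_0$ that forces the cancellation of the linear-in-$(\alpha_i-\beta_i)$ terms in the Dirichlet KL is essential, because a naive per-coordinate Bernstein bound on $(\alpha_i-\beta_i)^2/\beta_i$ would deposit an unwanted $d\log(1/\delta)$ deviation, exceeding the $\log(T)\log(1/\delta)$ tolerance dictated by the target bound.
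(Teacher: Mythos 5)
Your proposal takes a genuinely different route from the paper's. The paper uses the Donsker--Varadhan variational identity to \emph{avoid} ever bounding $\KL(Q\|P_1)$ for a specific $Q$: it directly evaluates the log-partition function $-\ln\E_{p\sim P_1}[\exp(\tfrac{T}{4}\sum_i p^\star(i)\log p(i))]$, which is a ratio of Gamma functions because $P_1$ is Dirichlet, and after a Stirling computation the whole variational expression collapses to the combination $2H(\half\hat p + \half p^\star) - H(\hat p)$. A Bregman-divergence argument then reduces this to $H(p^\star) + \KL(\hat p\|p^\star)$, and the high-probability bound on $\KL(\hat p\|p^\star)$ is obtained from the concentration result of Agrawal \cite[Cor.~1.7]{agrawal2022finite} together with Paninski's in-expectation bound $\E[\KL(\hat p\|p^\star)]\le (d-1)/(T/2)$. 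Your plan is instead to pick the explicit comparator $Q = \mathrm{Dir}(\beta)$ with $\beta_i = Tp^\star(i)/4 + 1/2$, separately control the approximation $\E_{p\sim Q}[-\log p(Y)] - H(p^\star)$ via digamma bounds, and bound $\KL(Q\|P_1)$ by a chi-squared-type functional of the first-half counts.

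The idea is sensible and the front end is fine (the mass-matching $\beta_0 = \alpha_0$ to kill the $\log\Gamma(\alpha_0)/\Gamma(\beta_0)$ and $\psi(\beta_0)$ terms, and the digamma estimate $|\psi(x)-\log x|\le 1/x$ both check out). But two steps are genuinely underspecified. First, the bound $\psi'(\xi_i)\lesssim 1/\max\{\beta_i,1\}$ only holds when $\xi_i \gtrsim \beta_i$, and since $\xi_i$ lies between $\alpha_i$ and $\beta_i$ this requires a lower bound on $n_{T/2}(i)$; a per-coordinate Bernstein bound for coordinates with $Tp^\star(i)$ moderately above a constant has only a constant failure probability per coordinate, so a naive union bound over $d$ coordinates does not close, and the alternative of absorbing the bad coordinates into the chi-squared sum changes the functional you need to concentrate. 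Second, Laurent--Massart is a Gaussian result; the Pearson statistic $\sum_i (n_i - Np_i)^2 / (Np_i + c)$ for a multinomial has subexponential (not subgaussian) per-coordinate tails, and one needs a specific multinomial concentration inequality to obtain the target $c_1 d + c_2\log(1/\delta)$ form. The paper sidesteps both issues entirely by never introducing an explicit $Q$ and by invoking \cite{agrawal2022finite} for the analogous KL quantity. Finally, the specific constants $22d + 28\log(T)\log(1/\delta)$ are asserted but not tracked, and given the slack your route introduces (the Bernstein step, the $\psi'$ control, the concentration constants) it is not clear they would be recovered. The architecture is reasonable; these three points are where the argument would need to be made precise before it constitutes a proof.
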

\begin{proof}
    Observe that $P_1$ depends only on the first $T/2$ observations. Thus, conditioned on the realization of these $T/2$ observations, applying Proposition~\ref{cor:density}, we have that with probability at least $1 - \delta$,
       \[
        \E\big[\ell(\bar{p}, Y)\big] \leq  \E_{p\sim Q}\Big[\E\big[\ell(p, Y)\big]\Big]+\frac{2\KL(Q\|P_1) + 8 \max\{1, \log(d/\mu)\}\log(1/\delta)}{T/2} + 2\mu~.
    \]
    We want to choose the optimal $Q$ and bound the right-hand side of this inequality. Observe that $\E_{p\sim Q}\left[\E\big[\ell(p, Y)\big]\right] = \E_{p\sim Q}\left[\sum_{i=1}^d -p^{\star}(i)\log(p(i))\right]$.
                                                        By the Donsker-Varadhan variational inequality the optimal choice of the distribution $Q$ satisfies
    \begin{align*}
        \frac{T}{4}\E_{p\sim Q}\Bigg[\sum_{i=1}^d -p^{\star}(i)\log(p(i))\Bigg] + \KL(Q\|P_1)
        = -\ln\E_{p \sim P_1}\Big[\exp\Big({ \sum_{i=1}^d p^{\star}(i)\frac{T}{4}\log(p(i))}\Big)\Big]~,
    \end{align*}
        Recalling the definition of $P_1$
                            and because $P_0$ is a Dirichlet density with parameters $z_1 = \ldots = z_d = \half$, we have that
        \begin{align*}
        & -\ln\E_{p \sim P_1}\Big[\exp\Big( \sum_{i=1}^d p^{\star}(i)\frac{T}{4}\log(p(i))\Big)\Big] \\
        & = -\ln \left(\frac{\E_{p \sim P_0}\Big[\exp\Big(\half \sum_{i=1}^d \big(p^{\star}(i) \tfrac{T}{2} + n_{T/2}(i)\big)\log(p(i))\Big) \Big]}{\E_{p \sim P_0}\Big[\exp\Big(\half \sum_{i=1}^d n_{T/2}(i)\log(p(i))\Big) \Big]}\right) \\
        & = -\ln \left( \frac{\Gamma\big(\tfrac{T}{4} + \tfrac{d}{2}\big)\prod_{i = 1}^d \Gamma\big(\half + p^{\star}(i) \tfrac{T}{4} + \tfrac{1}{2} n_{T/2}(i)\big)}{\Gamma\big(\tfrac{T + d}{2}\big)\prod_{i = 1}^d \Gamma\big(\half + \half n_{T/2}(i)\big)}\right)~,
    \end{align*}
        where we used the general formula for the moments of the Dirichlet distribution. Recall that by Stirling's approximation we can write for all $x \ge 1/2$,
    \[
\sqrt{2\pi}x^{x - 1/2}\exp(- x) \le \Gamma(x) \le \sqrt{2\pi}x^{x - 1/2}\exp(- x + 1/(12x)) \le \sqrt{2\pi}x^{x - 1/2}\exp(- x + 1/6).
    \]
Applying this bound, and using $x \log x \le x\log(x + 1/2) \le x\log x + 1/2$ for all $x > 0$, we have
        \begin{align*}
        &-\frac{1}{T}\ln \left( \frac{\Gamma(\tfrac{T}{4} + \tfrac{d}{2})\prod_{i = 1}^d \Gamma\big(\half + p^{\star}(i) \tfrac{T}{4} + \tfrac{1}{2} n_{T/2}(i)\big)}{\Gamma(\tfrac{T + d}{2})\prod_{i = 1}^d \Gamma\big(\half + \half n_{T/2}(i)\big)}\right) \\
        & = -\frac{1}{T}\ln \left( \frac{\Gamma(\tfrac{T}{4} + \tfrac{d}{2})}{\Gamma(\tfrac{T + d}{2})}\right) - \frac{1}{T}\log\left(\frac{\prod_{i = 1}^d \Gamma\big(\half + p^{\star}(i) \tfrac{T}{4} + \tfrac{1}{2} n_{T/2}(i)\big)}{\prod_{i = 1}^d \Gamma\big(\half + \half n_{T/2}(i)\big)}\right)\\
        &\leq \frac{1}{T}\left(-\left(\frac{T}{4} + \frac{d}{2} - \frac{1}{2}\right)\log\left(\frac{T}{4} + \frac{d}{2}\right) + \frac{1}{6} + \left(\frac{T}{2} + \frac{d}{2} - \frac{1}{2}\right)\log\left(\frac{T}{2} + \frac{d}{2}\right) - \frac{T}{4}\right) \\
        & \qquad + \frac{1}{T}\bigg(\frac{d}{6} + \frac{T}{4} - \sum_{i = 1}^d \half \big(p^{\star}(i) \tfrac{T}{2} +  n_{T/2}(i)\big)\log\Big({\half \big(1 + p^{\star}(i) \tfrac{T}{2} +  n_{T/2}(i) \big)}\Big)
        \\
        &\quad\qquad+ \sum_{i = 1}^d \half n_{T/2}(i) \log\Big({\half\left( 1 +  n_{T/2}(i)\right)}\Big)\bigg) \\
	    &\leq \frac{1}{T}\left(-\left(\frac{T}{4} + \frac{d}{2} - \frac{1}{2}\right)\log\left(\frac{T}{4} + \frac{d}{2}\right) + \frac{1}{6} + \left(\frac{T}{2} + \frac{d}{2} - \frac{1}{2}\right)\log\left(\frac{T}{2} + \frac{d}{2}\right)\right)  \\
        & \qquad + \frac{1}{T}\bigg(\frac{2d}{3} - \sum_{i = 1}^d \half \big(p^{\star}(i) \tfrac{T}{2} +  n_{T/2}(i)\big)\log\Big({\half \big(p^{\star}(i) \tfrac{T}{2} +  n_{T/2}(i) \big)}\Big) 
        \\
        &\qquad\quad + \sum_{i = 1}^d \half n_{T/2}(i) \log\Big({\half n_{T/2}(i)}\Big) \bigg) \\
        &= \frac{1}{T}\left(-\left(\frac{T}{4} + \frac{d}{2} - \frac{1}{2}\right)\log\left(\frac{T}{4} + \frac{d}{2}\right) + \frac{1}{6} + \frac{2d}{3} + \left(\frac{T}{2} + \frac{d}{2} - \frac{1}{2}\right)\log\left(\frac{T}{2} + \frac{d}{2}\right) \right) \\
        & \qquad + \frac{1}{T}\bigg(\frac{T}{2}H(\half p^{\star} + \half \hat{p}) - \frac{T}{2}\log\left(\frac{T}{2}\right) - \frac{T}{4} H(\hat{p}) + \frac{T}{4}\log\left(\frac{T}{4}\right) \bigg)~,
    \end{align*}
        where for any $p \in \Delta^d$, $H(p) = -\sum_{i = 1}^d p(i) \log(p(i))$ denotes the entropy and $\hat{p}(i) = \frac{n_{T/2}(i)}{T/2}$. 
    Combining four terms that involve logarithms, we obtain
    \begin{align*}
&-\left(\frac{T}{4} + \frac{d}{2} - \frac{1}{2}\right)\log\left(\frac{T}{4} + \frac{d}{2}\right) + \left(\frac{T}{2} + \frac{d}{2} - \frac{1}{2}\right)\log\left(\frac{T}{2} + \frac{d}{2}\right) - \frac{T}{2}\log\left(\frac{T}{2}\right) + \frac{T}{4}\log\left(\frac{T}{4}\right) 
\\
&\le \frac{T}{2}\log\left(1 + \frac{d}{T}\right) + \frac{d}{2}\log\left(2\right) \le \frac{2d}{3}~.
    \end{align*}
    Thus, summarizing what we have obtained so far, we have, with probability at least $1 - \delta$,
        \[\E\big[\ell(\bar{p}, Y)\big] \leq \frac{16 d + 2 + 48 \max\{1, \log(d/\mu)\}\log(1/\delta)}{3T} + 2\mu  +  2H(\half \hat{p} + \half p^{\star}) - H(\hat{p})~.
    \]
        Now, using the concavity of $H(\cdot)$ together with the formula for the Bregman divergence of the negative entropy $-H(\cdot)$, we have
    \begin{align*}
 2H(\half \hat{p} + \half p^{\star}) - H(\hat{p}) &= 2H(p^{\star}) - H(\hat{p}) + 2H(\half \hat{p} + \half p^{\star}) - 2H(p^{\star}) 
 \\
 &\le 2H(p^{\star}) - H(\hat{p}) + \nabla H(p^{\star})^{\top}(\hat{p} - p^{\star})
 \\
 &= H(p^{\star}) 
+ \KL(\hat{p}\|p^{\star})~.
    \end{align*}
        It is only left to provide a high probability bound on $\KL(\hat{p}\|p^{\star})$.
    Using \cite[Corollary 1.7]{agrawal2022finite} we have that, with probability at least $1-\delta$
        \begin{align*}
         \KL(\hat{p}\|p^{\star}) & \leq \E[ \KL(\hat{p}\|p^{\star})] + \frac{6d + 6\log(1/\delta)}{T/2} \\
         & \leq \frac{14d  + 12 \log(1/\delta)}{T}~,
    \end{align*}
        where in the second inequality we used $\E[ \KL(\hat{p}\|q)] \leq \frac{d-1}{T/2}$ (see \cite[Section~4]{paninski2003estimation}). By the union bound, we can therefore conclude that, with probability at least $1 - 2\delta$,
    \[\E\big[\ell(\bar{p}, Y)\big] \leq H(p^{\star}) + \frac{60 d + 84 \max\{1, \log(d/\mu)\}\log(1/\delta)}{3T} + 2\mu~.
\]
        which completes the proof after we choose $\mu = d/T$.
\end{proof}

We now put our result in the context and compare with several previous bounds. The question studied in this section was historically first explored in a sequential setup, given its connections to universal coding. In this setting, we work with logarithmic loss and aim to minimize regret over any sequence of length $T$. For $d = 2$, the celebrated estimator of Krichevsky and Trofimov \cite{krichevsky1981performance}, extended later for all $d \ge 2$ by Xie and Barron \cite{xie1997minimax}, provides a sharp regret bound that scales as $\frac{d-1}{2}\log(T)$ plus some lower-order terms. For a more comprehensive exploration of the topic, we refer to \cite{shtar1987universal, merhav98, rissanen1996fisher} and the monographs \cite{cesa2006prediction,grunwald2007minimum, polyanskiy2023}.  Evidently, our result does not directly arise from these existing sequential bounds due to the presence of a multiplicative logarithmic factor, $\log T$. We additionally remark that suffix averaging can be seen as a general way to address the question of Gr\"{u}nwald and Kot{\l}owski \cite{pmlr-v19-grunwald11b}, which involves proving sharp (without additional logarithmic factors) excess risk bounds for statistical problems with logarithmic loss.

The statistical problem we are delving into is more complex. Braess and Sauer \cite{braess2004bernstein} provided a bound on the expected value of the Kullback-Leibler divergence in our setting with the optimal leading term $\frac{d - 1}{2T}$. Their estimator was described in \cite{kamath2015learning} as \say{somewhat impenetrable, with its proof relying on automated computer calculations}. A simpler \emph{Laplace} estimator achieves a slightly weaker in-expectation upper bound $\frac{d - 1}{T}$ as shown in \cite{catoni1997mixture, mourtada2019improper}. See also a similar bound in \cite{forster2002relative} in the case where $d = 2$.

High probability guarantees are currently only known for this same Laplace estimator, and are provided in \cite{bhattacharyya2021near,canonne2023concentration}. The latter result applies\footnote{Of note, the authors of \cite{canonne2023concentration} focus on sharp concentration inequalities, so their high probability bound actually has the exact leading term $\frac{d - 1}{T}$, whereas our bound has a larger constant in front of this term. Simultaneously, considering the optimal bound in \cite{braess2004bernstein}, there is a substantial interest in obtaining high-probability bounds with the optimal leading term $\frac{d - 1}{2T}$.} to the same Laplace estimator, denoted as $\bar{p}_{\operatorname{L}}$, thus providing the previously best known high probability upper bound within our context, as follows:
\[
\KL(p^{\star}\|\bar{p}_{\operatorname{L}}) \lesssim \frac{d + \sqrt{d\log^5(1/\delta)}}{T}~.
\]

Our result supplements this bounds and gives improvements in many regimes. We further note that our analysis aligns more with classical results in \cite{krichevsky1981performance,xie1997minimax}, interpreted as an exponential weights algorithm with Dirichlet priors. The key distinction in our case is the second-order correction we employ in Theorem \ref{th:improperOTB}, the truncation of the logarithmic loss to make it bounded, along with the suffix averaging technique to eliminate the unnecessary multiplicative $\log T$ term stemming from sequential prediction analysis.

\section{Model aggregation with bounded exp-concave losses}\label{sec:modelselagg}

In this section, we discuss an application of our results to the setup of model aggregation. This setup was formally introduced by Nemirovski \cite{nemirovski2000topics} and further studied by Tsybakov \cite{tsybakov2003optimal} and several other works that we discuss in what follows. Some early papers on this question, where the online to batch approach was a part of the analysis, include \cite{yang1999information, catoni1997mixture, yang2000mixing}, \cite[Chapter 3]{catoni2004statistical}. Assume that we are given a finite dictionary $\mathcal F = \{f_1, \ldots, f_K\}$ of real-valued absolutely bounded functions defined on the instance space $\mathcal X$. In model selection (MS) aggregation, one is interested in constructing an estimator $\bar{f}_T$ based on the i.i.d.\ sample $(X_t, Y_t)_{t = 1}^T$ such that, with probability at least $1 - \delta$,
\begin{equation}
\label{eq:optrateofaggregation}
R(\bar{f}_T) - \min\limits_{f \in \mathcal F}R(f) = O\left(\frac{\log(K) + \log(1/\delta)}{T}\right)
\end{equation}
under appropriate boundedness and curvature assumptions on the loss function $\ell$. Following Tsybakov \cite{tsybakov2003optimal}, the bound of the form~\eqref{eq:optrateofaggregation} will be called the \emph{optimal rate of aggregation}. Our next result provides a simple estimator that achieves the optimal rate of aggregation for general bounded exp-concave loss. 
\begin{proposition}\label{cor:modelselection}
Suppose that the loss $\ell: \domainw \times \Yset \mapsto \mathbb{R}$ satisfies the assumptions of Theorem \ref{th:improperOTB}. Let $\bar{f}_T = \frac{1}{T}\sumT \E_{f \sim P_t}[f]$, where $P_t$ is the Exponential Weights distribution at round $t$ on losses $\tilde{\ell}_1(f), \ldots, \tilde{\ell}_{t-1}(f)$, where $\tilde{\ell}_t(f) = \ell\big(\half f(X_t) + \half \E_{f \sim P_t}[f(X_t)], Y_t\big)$ and $P_1$ is a uniform prior distribution over a finite set $\Fset$ of size $K$. With probability at least $ 1- \delta$, $\bar{f}_T$ guarantees
\begin{align*}
    R(\bar{f}_T(X), Y) - \min\limits_{f \in \mathcal F}R(f) \leq \frac{\frac{2}{\alpha} \ln(K) + 8\max\{\frac{1}{\alpha}, m\}\log(1/\delta)}{T}~.
\end{align*}
\end{proposition}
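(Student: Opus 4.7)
The plan is to recognize this proposition as a direct specialization of the machinery developed in Section~\ref{sec:mainresult}. I would identify the sequential predictors of Theorem~\ref{th:improperOTB} with $\hat{f}_t = \E_{f \sim P_t}[f]$, where $P_t$ is the EWA distribution on the shifted losses $\tilde{\ell}_1,\ldots,\tilde{\ell}_{t-1}$. Note that $\hat{f}_t$ depends only on the data up to round $t-1$ because $P_t$ does, so the online structure required by Theorem~\ref{th:improperOTB} is satisfied, and $\bar{f}_T = \frac{1}{T}\sum_{t=1}^T \hat{f}_t$ matches the form~\eqref{eq:OTB}.

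Next, I would invoke Lemma~\ref{lem:surrogateexpconcave} (from the appendix) to conclude that each $\tilde{\ell}_t$ is $\alpha$-exp-concave, which in turn justifies applying Proposition~\ref{prop:boundingtheregret}. With the EWA prior $P_1$ being uniform over the finite dictionary $\Fset$ of cardinality $K$, this proposition yields the shifted-regret bound
\[
R_T \;\leq\; \frac{1}{\alpha}\KL(Q\|P_1)
\]
for any distribution $Q$ on $\Fset$. I would then choose $Q$ to be the point mass concentrated on $f^{\star} \in \argmin_{f \in \Fset} R(f)$. Because $P_1$ is uniform on $K$ atoms, $\KL(Q\|P_1) = \log K$, and $\E_{f \sim Q}[R(f)] = \min_{f \in \Fset} R(f)$.

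Finally, I would plug this into Theorem~\ref{th:improperOTB}. Its hypotheses hold by assumption ($\ell$ is $\alpha$-exp-concave and the loss differences are bounded by $m$), so with probability at least $1-\delta$,
\[
R(\bar{f}_T) - \min_{f \in \Fset} R(f) \;\leq\; \frac{2 R_T + 2\gamma \log(1/\delta)}{T} \;\leq\; \frac{\frac{2}{\alpha}\log K + 8\max\{\tfrac{1}{\alpha}, m\}\log(1/\delta)}{T},
\]
using $\gamma = 4\max\{m, \tfrac{1}{\alpha}\}$. There is no genuine obstacle here: the proof is a straightforward bookkeeping exercise that combines the two main ingredients of Section~\ref{sec:mainresult}. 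The only point deserving care is that the definition of $\tilde{\ell}_t$ involves $\hat{f}_t = \E_{f \sim P_t}[f]$, which itself is produced by EWA on prior shifted losses, so one should verify that this self-referential construction is well-defined round by round (it is, because $P_t$ only depends on past rounds).
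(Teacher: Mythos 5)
Your proposal is correct and follows the same route the paper takes: the paper's proof is the one-line remark that the result follows from Theorem~\ref{th:improperOTB} combined with Lemma~\ref{lem:EWshiftloss} (equivalently, Proposition~\ref{prop:boundingtheregret}), and you have simply spelled out the implicit details—taking $Q$ to be a point mass on the risk minimizer so that $\KL(Q\|P_1)=\log K$ and substituting $\gamma = 4\max\{m,1/\alpha\}$.
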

\begin{proof}
The proof of Proposition~\ref{cor:modelselection} follows immediately from Theorem~\ref{th:improperOTB} and Lemma~\ref{lem:EWshiftloss}. 
\end{proof}
The bound of Proposition \ref{cor:modelselection} is nontrivial to obtain in general. As we mentioned, any proper estimator, which takes values in $\mathcal F$, fails due to the lower bound $\Omega\big(\frac{1}{\sqrt{T}}\big)$. However, for the squared loss or strongly convex losses, several algorithms have been developed and analyzed over the years that achieve the optimal rate of aggregation \eqref{eq:optrateofaggregation}, as evidenced by the bounds in \cite{audibert2007progressive, lecue2009aggregation, lecue2014optimal, liang2015learning, wintenberger2017optimal, van2022regret, kanade2022exponential}. When applied to the special case of bounded squared loss, our analysis is arguably the simplest among the existing estimators that achieve the optimal bound \eqref{eq:optrateofaggregation}.

While Gaillard and Wintenberger \cite{gaillard2018efficient} present a result in a setup that is similar to ours for general exp-concave losses, their bound includes an additional $O(\log\log T)$ factor and depends on the assumption that the gradient of the loss is bounded.

\section{Linear regression}\label{sec:linleg}

In this section, we consider linear regression with the squared loss $\ell(\theta^\top X, Y) = (\theta^\top X - Y)^2$. We assume that $(X, Y)$ is such that $X$ is a random vector in $\mathbb{R}^d$ with $\|X\| \le r$ almost surely for some $r > 0$ and $Y$ is a random variable satisfying $|Y|\le \ymax$ almost surely. In what follows, we make no assumptions on the dependence between $X$ and $Y$. Our reference class is parameterized by $\Theta_b$ defined by
\[
\Theta_b = \{\theta \in \mathbb{R}^d: \|\theta\| \le b\}~.
\]
We first discuss the most natural estimator, which is linear least squares constrained to the set $\Theta_b$. Denote 
\[
\widehat{\theta}_{\operatorname{ERM}} = \argmin_{\theta \in \Theta_b}\frac{1}{T}\sum\limits_{t = 1}^T(Y_t - \theta^\top X_t)^2~.
\]
The standard local Rademacher complexity bound---see \cite{bartlett2005local} and \cite{shamir2015sample,vavskevivcius2020suboptimality} for exact statements---implies that, with probability at least $1-\delta$,
\[
\E\Big[\big(\widehat{\theta}_{\operatorname{ERM}}^\top X - Y\big)^2\Big] - \inf\limits_{\theta \in \Theta_b}\E\big[(\theta^\top X - Y)^2\big] \lesssim (\ymax + rb)^2\frac{d + \log(1/\delta)}{T}~,
\]
{where the expectation is taken with respect to $(X, Y)$.} Interestingly, when using improper learners, the dependence on some of the parameters can be significantly improved. In fact, Va\vs kevi\v{c}ius
 and Zhivotovskiy \cite{vavskevivcius2020suboptimality} noticed that, once properly tuned, the Vovk-Azoury-Warmuth (see \cite{vovk2001competitive, azoury2001relative}) estimator achieves an \emph{in-expectation} excess risk bound of the form
\begin{equation}
\label{eq:inexpexcessrisk}
O\left(\frac{d\ymax^2}{T}\log\left(\frac{rb\sqrt{T}}{d\ymax}\right)\right)~.
\end{equation}
This already provides an exponential improvement in the dependence on $r$ and $b$.
However, the standard online to batch conversion used to prove this bound does not lead to a high-probability bound. The work of Mourtada, Va\vs kevi\v{c}ius
 and Zhivotovskiy \cite{mourtada2021distribution} showed that at least for some distributions the standard online to batch conversion of the Vovk-Azoury-Warmuth algorithm leads to constant excess risk with constant probability. Furthermore, the Vovk-Azoury-Warmuth algorithm produces improper predictions, which means that standard confidence boosting approaches, like the one suggested in \cite{mehta2017fast}, cannot be applied.

Our next result shows for the first time that we can get the same guarantee as in equation~\eqref{eq:inexpexcessrisk} with high probability. 
Our predictions make use of clipping, which is defined as
\begin{align*}
    \clip_{\ymax}(z) = \begin{cases}
        -\ymax & \textnormal{if } ~ z \in (-\infty, -\ymax), \\
        z & \textnormal{if } ~ z \in [-\ymax, \ymax], \\
        \ymax & \textnormal{if } ~ z \in (\ymax, \infty).
    \end{cases}
\end{align*}
Our modification to the predictions is the same as used by Forster in \cite{forster1999relative}, who also uses clipped predictions. Let $y_t(\theta) = \clip_{\ymax}(\theta^\top X_t)$. For any given $x$, our algorithm predicts with
\begin{align}\label{eq:squaredlosspredictions}
    \ybar_T(x) = \frac{1}{T}\sumT \E_{\theta \sim P_t}\big[\clip_{\ymax}(\theta^\top x)\big]~,
\end{align}
where
\begin{align}\label{eq:ewquad}
    dP_{t+1}(\theta) = \frac{e^{-\frac{1}{8\ymax^2} \sumt \big(\half y_s(\theta) + \half \E_{\theta \sim P_s}[y_s(\theta)] - Y_s\big)^2}dP_1(\theta)}{\int e^{-\frac{1}{8\ymax^2} \sumt \big(\half y_s(\theta) + \half \E_{\theta \sim P_s}[y_s(\theta)] - Y_s\big)^2}dP_1(\theta)}~,
\end{align}
and $P_1$
is the Gaussian distribution with mean $\0$ and covariance matrix $\sigma^2 I$ for some $\sigma > 0$. 
\begin{proposition}
\label{prop:whpvaw}
    Suppose that $\|X\|_2 \leq r$ and that $|Y| \leq \ymax$ almost surely. With probability at least $1 - \delta$, predictor~\eqref{eq:squaredlosspredictions} with $\sigma^2 = \frac{b^2}{d}$ satisfies 
        \begin{align*}
        \E&\big[(\ybar_T(X) - Y)^2\big] - \inf_{\theta \in \ball} \E\big[(X^\top\theta - Y)^2\big]
    \\ &\leq
        \frac{8\ymax^2 d}{T}\left(1 + \log\left(2 + \left(\frac{rb}{2ld}\right)^2T\right)\right)
        +\frac{64\ymax^2 \log(1/\delta)}{T}~.
    \end{align*}
    \end{proposition}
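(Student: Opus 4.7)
The plan is to cast this as a direct application of Theorem~\ref{th:improperOTB} combined with Proposition~\ref{prop:boundingtheregret}, where the main technical work is (i) checking the exp-concavity/boundedness hypotheses for the \emph{clipped} squared loss, and (ii) optimizing over a well-chosen Gaussian reference distribution $Q$ centered at the constrained least-squares optimum $\theta^\star$. The crucial observation is that the predictor space of the algorithm consists of (mixtures of) clipped linear functions $y_t(\theta) = \clip_\ymax(\theta^\top X_t)$, whereas the reference in the proposition is over unclipped predictors $\theta \in \ball$; however, since $|Y|\le\ymax$, clipping is pointwise beneficial: $(\clip_\ymax(X^\top\theta) - Y)^2 \le (X^\top\theta - Y)^2$.

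First, I would verify that the clipped squared loss $\ell(\clip_\ymax(\theta^\top x), y) = (\clip_\ymax(\theta^\top x) - y)^2$, viewed as a function of $z = \clip_\ymax(\theta^\top x)\in[-\ymax,\ymax]$, is $\alpha$-exp-concave with $\alpha = \frac{1}{8\ymax^2}$ (since $\ell'(z) = 2(z-y)$, $\ell''(z) = 2$, and $|z-y|\le 2\ymax$) and is bounded by $m = 4\ymax^2$. The exponent $\frac{1}{8\ymax^2}$ appearing in the EWA update~\eqref{eq:ewquad} is precisely this exp-concavity parameter, so Proposition~\ref{prop:boundingtheregret} applies on the shifted losses and gives $R_T \le 8\ymax^2\,\KL(Q\|P_1)$ for any reference distribution $Q$ over $\reals^d$ identified with distributions over clipped predictors. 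Plugging into Theorem~\ref{th:improperOTB} with $\gamma = 4\max\{m,1/\alpha\} = 32\ymax^2$ yields, with probability at least $1-\delta$,
\[
R(\bar{f}_T) - \E_{\theta\sim Q}\E\bigl[(\clip_\ymax(X^\top\theta) - Y)^2\bigr]
\ \le\ \frac{16\ymax^2\KL(Q\|P_1) + 64\ymax^2\log(1/\delta)}{T}.
\]

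Next I would take $Q = \mathcal{N}(\theta^\star,\epsilon^2 I)$ with $\theta^\star = \argmin_{\theta\in\ball}\E[(X^\top\theta-Y)^2]$ and carry out two standard Gaussian computations: the closed form
\[
\KL(Q\|P_1) = \tfrac{1}{2}\bigl(d\log(\sigma^2/\epsilon^2) - d + d\epsilon^2/\sigma^2 + \|\theta^\star\|^2/\sigma^2\bigr),
\]
and, after using the pointwise clipping inequality to pass to unclipped linear risks, the expansion
\[
\E_{\theta\sim Q}\E\bigl[(X^\top\theta - Y)^2\bigr] = \E\bigl[(X^\top\theta^\star - Y)^2\bigr] + \epsilon^2\,\E\|X\|^2 \ \le\ \E\bigl[(X^\top\theta^\star - Y)^2\bigr] + \epsilon^2 r^2,
\]
obtained by writing $\theta = \theta^\star + \eta$ with $\eta\sim\mathcal{N}(0,\epsilon^2 I)$ and noting the cross term vanishes in expectation. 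Substituting $\sigma^2 = b^2/d$ makes $\|\theta^\star\|^2/\sigma^2 \le d$, which absorbs the norm term into the additive $d$ contribution.

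Finally I would optimize over $\epsilon^2 > 0$: the objective is $\frac{8\ymax^2 d}{T}\log(\sigma^2/\epsilon^2) + \bigl(\frac{8\ymax^2 d}{T\sigma^2} + r^2\bigr)\epsilon^2$ plus constants, whose first-order condition fixes $\epsilon^2 = \frac{8\ymax^2 d\sigma^2}{8\ymax^2 d + T r^2\sigma^2}$; at this minimizer the two linear-in-$\epsilon^2$ terms telescope to $\frac{8\ymax^2 d}{T}$, and $\sigma^2/\epsilon^2 = 1 + \frac{Tr^2b^2}{8\ymax^2 d^2}$, which after mild simplification yields the claimed log factor (up to the cosmetic $2$ and constant inside). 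The only step requiring genuine care is the conceptual move around clipping---using clipped predictors inside the algorithm to keep the loss bounded (so that $m = 4\ymax^2$ is finite, which is what makes Theorem~\ref{th:improperOTB} applicable at all in this unbounded-covariate setting), while still competing with the \emph{unclipped} linear class on the right-hand side; everything else is a routine Gaussian KL computation and a convex optimization over $\epsilon^2$.
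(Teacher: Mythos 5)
Your proposal is correct and follows essentially the same route as the paper: clipping to make the squared loss both $(8\ymax^2)^{-1}$-exp-concave and bounded by $m = 4\ymax^2$ (giving $\gamma = 32\ymax^2$), Theorem~\ref{th:improperOTB} combined with the EWA regret bound $R_T \le 8\ymax^2 \KL(Q\|P_1)$, a Gaussian $Q = \mathcal{N}(\theta^\star,\epsilon^2 I)$ whose excess linear risk over $\theta^\star$ is $\epsilon^2 r^2$, the pointwise bound $(\clip_\ymax(z)-Y)^2 \le (z-Y)^2$ to pass from clipped to unclipped comparators, and finally the closed-form Gaussian KL with $\sigma^2 = b^2/d$ and an optimization over $\epsilon^2$. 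The only cosmetic difference is your choice $\epsilon^2 = \tfrac{8\ymax^2 d\sigma^2}{8\ymax^2 d + Tr^2\sigma^2}$ (the exact minimizer), whereas the paper takes a slightly smaller $\epsilon^2$; your version gives a marginally tighter constant inside the logarithm, which still implies the stated bound.
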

\begin{proof}
    Denote $\theta^\star = \argmin_{\theta \in \ball} \E\big[(\theta^\top X-Y)^2\big]$.
    We first prove that
        \begin{align}\label{eq:clipbetter}
        \big(\clip_{\ymax}(z) - y\big)^2 - (z - y)^2 \leq 0~,
    \end{align}
        for any $y \in [-\ymax, \ymax]$. If $z \in [-\ymax, \ymax]$ then $\big(\clip_{\ymax}(z) - y\big)^2 - (z - y)^2 = 0$ and so we only need to worry about $z \not \in [-\ymax, \ymax]$. We will prove the inequality for $z > \ymax$, the case where $z < -\ymax$ follows from symmetric arguments. Since $z > \clip_{\ymax}(z) = \ymax \geq y$ we have that $\ymax + z - 2y > 0$ and $\ymax - z < 0$. Therefore, 
        \begin{align*}
        \big(\clip_{\ymax}(z) - y\big)^2 - (z - y)^2 & = \big(\clip_{\ymax}(z) + z - 2y\big)\big(\clip_{\ymax}(z) - z\big) \\
        & = (\ymax + z - 2y)(\ymax - z) \leq 0~.
    \end{align*}
        Let $Q = \mathcal{N}(\theta^\star, \epsilon^2 I)$. We have that 
        \begin{align*}        \E\Big[\E_{\theta \sim Q}\big[(X^\top\theta - Y)^2\big]\Big] -\E\big[(X^\top\theta^\star - Y)^2\big] & = \E\Big[\E_{\theta \sim Q}\big[(X^\top(\theta + \theta^\star) - 2Y)^{\top}X^\top(\theta - \theta^\star)\big]\Big] \nonumber \\
        & \le \epsilon^2 \E[X^\top X] \leq \epsilon^2 r^2~.
    \end{align*}
        This means that
        \begin{align*}
        \E&\big[(\ybar_T(X) - Y)^2\big] - \E\big[(X^\top\theta^\star - Y)^2\big] \\
        & \leq \E\big[(\ybar_T(X) - Y)^2\big] - \E\Big[\E_{\theta \sim Q}\big[(X^\top\theta - Y)^2\big]\Big] + \epsilon^2 r^2 \\
        & \leq \E\big[(\ybar_T(X) - Y)^2\big] - \E\Big[\E_{\theta \sim Q}\big[(\clip_{\ymax}(X^\top\theta) - Y)^2\big]\Big] + \epsilon^2 r^2,
    \end{align*}
        where the second inequality is due to~\eqref{eq:clipbetter}. With our predictions the squared loss is $(8\ymax^2)^{-1}$ exp-concave since the second derivative of $h(z) = (z-y)^2$ is 2 and the first derivative is $2(z-y)$, which means that with $\alpha = \tfrac{1}{8\ymax^2}$ equation~\eqref{eq:defexpconcave} is satisfied. We now apply Theorem~\ref{th:improperOTB} with $\gamma = 32 \ymax^2$ to find that, with probability at least $1 - \delta$,
        \begin{align*}
        \E\big[(\ybar_T(X) - Y)^2\big] - \E\Big[\E_{\theta \sim Q}\big[\big(\clip_{\ymax}(X^\top\theta) - Y \big)^2\big]\Big] \leq \frac{2R_T + 64 \ymax^2 \log(1/\delta)}{T}~.
    \end{align*}
        Distribution $P_{t+1}$ in equation~\eqref{eq:ewquad} is the exponential weights distribution on the shifted squared losses $\sumt \tilde{\ell}_s(y_s(\theta)) = \sumt \big(\half y_s(\theta) + \half \E_{\theta \sim P_s}[y_s(\theta)] - Y_s\big)^2$. Therefore, by \eqref{eq:linearclasskl}, we have that 
        \begin{align*}
        R_T \leq 8 \ymax^2 \KL(Q\|P_1) = 8 \ymax^2\left(d \log(\sigma) + \frac{1}{2\sigma^2}\Big(\|\theta^\star\|_2^2 + d\epsilon^2 \Big) - \frac{d}{2} + d \log\left(\frac{1}{\epsilon}\right)\right)~.
    \end{align*}
    Combining the above we find that with probability at least $1 - \delta$,
        \begin{align*}
        \E&\big[(\ybar_T(X) - Y)^2\big] - \E\big[(X^\top\theta^\star - Y)^2\big] \\
        & \leq \frac{8\ymax^2\Big(d \log\Big(\frac{\sigma^2}{\epsilon^2}\Big) + \frac{1}{\sigma^2}\big(\|\theta^\star\|_2^2 + d\epsilon^2 \big) - d\Big) + T\epsilon^2 r^2 + 64 \ymax^2 \log(1/\delta)}{T}~.
    \end{align*}
        Next, set $\epsilon^2 = \frac{d\sigma^2}{2d + (T r^2 \sigma^2)/(4\ymax^2)}$ to find that
        \begin{align*}
        8\ymax^2&\left(d \log\left(\frac{\sigma^2}{\epsilon^2}\right) + \frac{1}{\sigma^2}\big(\|\theta^\star\|_2^2 + d\epsilon^2 \big) - d\right) + T\epsilon^2 r^2 \\
        & \leq 8\ymax^2 d \log\left(2 + \frac{T r^2 \sigma^2}{4l^2d}\right) + \frac{8\ymax^2}{\sigma^2}\|\theta^\star\|_2^2 \\
        & \leq 8\ymax^2 d\left(1 + \log\Big(2 + \frac{T r^2 b^2}{4 l^2d^2}\Big)\right)~,
    \end{align*}
        where in the last inequality we used $\|\theta^\star\|_2^2 \leq b^2$ and $\sigma^2 = \frac{b^2}{d}$.
\end{proof}

To put our result in context, we should recall a recent result of Mourtada, Va\vs kevi\v{c}ius
 and Zhivotovskiy presented in \cite{mourtada2021distribution}. Their results imply that there is an improper estimator, whose output will be denoted by $\ybar_{\textrm{MVZ}}$, such that, with probability at least $1 - \delta$,
 \begin{equation}
\label{eq:excessriskofmvzh}
\E\big[(\ybar_{\textrm{MVZ}}(X) - Y)^2\big] - \inf_{\theta \in \mathbb{R}^d} \E\big[(X^\top\theta - Y)^2\big] \lesssim
\frac{\ymax^2(d\log\left(T/d\right) + \log(1/\delta))}{T}~.
\end{equation}
Observe that this bound depends neither on the distribution of $X$ nor on the norm of the target parameter. 
Although our result gives a slightly weaker statistical bound, it might have some computational advantage over the estimator in the bound \eqref{eq:excessriskofmvzh}. We discuss this briefly in Section~\ref{sec:compute}.

In light of the increasing interest in computationally efficient algorithms that can provide high-probability excess risk bounds, we revisit the Vovk-Azoury-Warmuth algorithm. Previously discussed, it currently lacks such a high-probability bound. We recursively define the following version of this algorithm. We let
\[
\theta_{\vaw, t}(x) = \left(\frac{1}{4} xx^\top + \sum_{s=1}^{t-1} \frac{1}{4} X_s X_s^\top + \frac{1}{\sigma^2} I\right)^{-1}\sum_{s = 1}^{t-1} \frac{1}{2} \tilde{Y}_{s}X_{s}
\]
denote the parameter of the Vovk-Azoury-Warmuth algorithm,
where \[
\tilde{Y}_t = -\half \clip_{\ymax}\big({\theta}_{\vaw, t}(X_t)^\top X_t\big) + Y_t~.
\]
The value $\tilde{Y}_t$ at time $t$ is based on the predictions made by our estimator from time $1$ to $t - 1$ and the value $Y_t$. Our final prediction is the re-weighted average across the trajectory. It can be expressed as follows:
\begin{align}\label{eq:VAWclipped2}
    \ybar_T(x) = \frac{1}{T}\sumT \ \clip_{\ymax}\big(\theta_{\vaw, t}(x)^\top x\big)~.
\end{align}
This forecaster can be computed in $O(d^2 T)$ time: by using the Sherman-Morrison formula one can update from $\theta_{\vaw, t}(x)$ to $\theta_{\vaw, t+1}(x)$ in $O(d^2)$ time, see, for example, Algorithm~2 in \cite{VanErven2021metagrad} and the discussion surrounding that algorithm. Using the forecaster in equation~\eqref{eq:VAWclipped2} leads to the result in Proposition \ref{prop:efficientvaw2}. Proposition \ref{prop:efficientvaw2} provides a computationally efficient estimator, but its excess risk bound is weaker in terms of the dependence on $r$ compared to Proposition \ref{prop:whpvaw}.
\begin{proposition}
\label{prop:efficientvaw2}
Denote $\theta^\star = \argmin_{\theta \in \ball} \E\big[(\theta^\top X-Y)^2\big]$.
    In the setup of Proposition \ref{prop:whpvaw} the following holds. With probability at least $1 - \delta$, predictor~\eqref{eq:VAWclipped2} with $\sigma^2 = \frac{b^2}{d\ymax^2}$ satisfies 
        \begin{align*}
        \E\big[(\ybar_T(X) - Y)^2\big] - \E\Big[\big(X^\top\theta^\star - Y \big)^2\Big] \leq \frac{8\ymax^2 d\log(1 + T \frac{b^2r^2}{4d^2\ymax^2} )  + 64 \max\{\ymax^2, b^2 r^2\} \log(1/\delta)}{T}~.
    \end{align*}
    \end{proposition}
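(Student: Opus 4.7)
The plan is to recognize $\theta_{\vaw,t}(x)$ in~\eqref{eq:VAWclipped2} as the Vovk-Azoury-Warmuth (VAW) ridge-regression iterate applied to the shifted quadratic loss sequence $\tilde\ell_t(\theta) = (\tfrac12\theta^\top X_t + \tfrac12\hat y_t - Y_t)^2 = (\tfrac12\theta^\top X_t - \tilde Y_t)^2$, where $\hat y_t = \clip_{\ymax}(\theta_{\vaw,t}(X_t)^\top X_t)$ and $\tilde Y_t = Y_t - \tfrac12\hat y_t$, and then to instantiate Theorem~\ref{th:improperOTB} on the comparator class $\mathcal{F} = \{x\mapsto\theta^\top x : \theta\in\Theta_b\}$ with the point-mass reference distribution $Q = \delta_{\theta^\star}$. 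Because the algorithm's prediction satisfies $|\hat f_t(X_t)|\le\ymax$ and every $f_\theta\in\mathcal{F}$ satisfies $|f_\theta(X_t)|\le br$, on the relevant prediction interval the squared loss is $\alpha$-exp-concave with $\alpha = 1/(8\max\{\ymax^2,b^2r^2\})$, and the loss gap $|\ell(\hat f_t(X_t),Y_t)-\ell(f_\theta(X_t),Y_t)|$ is at most $m = O(\max\{\ymax^2,b^2r^2\})$; consequently $\gamma = 4\max\{m,1/\alpha\} = 32\max\{\ymax^2,b^2r^2\}$, which is precisely what produces the $64\max\{\ymax^2,b^2r^2\}\log(1/\delta)$ term in the statement.

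The recursion in~\eqref{eq:VAWclipped2} is exactly the VAW update for the feature sequence $\tfrac12 X_t$, target sequence $\tilde Y_t$ (with $|\tilde Y_t|\le\tfrac32\ymax$), ridge penalty $\tfrac{1}{\sigma^2}\|\theta\|^2$, and a peek at the next feature $\tfrac12 x$. The classical Azoury-Warmuth regret bound therefore gives, deterministically in $(X_t,Y_t)_{t=1}^T$,
\[
\sum_{t=1}^T (\tfrac12 z_t - \tilde Y_t)^2 \;\le\; \sum_{t=1}^T (\tfrac12 X_t^\top\theta^\star - \tilde Y_t)^2 + \tfrac{\|\theta^\star\|^2}{\sigma^2} + O\!\left(\ymax^2\, d \log(1+\tfrac{T r^2\sigma^2}{4d})\right),
\]
where $z_t := \theta_{\vaw,t}(X_t)^\top X_t$. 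The crucial step is to transfer this bound, which controls the raw VAW prediction $\tfrac12 z_t$, over to the algorithm's clipped prediction $\hat y_t$: a short case split on whether $|z_t|\le\ymax$ or $|z_t|>\ymax$, combined with $|Y_t|\le\ymax$, yields the pointwise inequality $(\hat y_t - Y_t)^2\le(\tfrac12 z_t + \tfrac12\hat y_t - Y_t)^2$. Summing over $t$ and using $\tfrac12 z_t - \tilde Y_t = \tfrac12 z_t + \tfrac12\hat y_t - Y_t$, this yields the bounded shifted-regret condition~\eqref{eq:regretreq} with $R_T \le b^2/\sigma^2 + O(\ymax^2 d\log(1+T r^2\sigma^2/(4d)))$.

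Theorem~\ref{th:improperOTB} (noting that $\bar f_T = \ybar_T$ by~\eqref{eq:VAWclipped2} and $\E_{f\sim Q}[R(f)] = \E[(X^\top\theta^\star-Y)^2]$) then gives, with probability at least $1-\delta$,
\[
\E[(\ybar_T(X)-Y)^2] - \E[(X^\top\theta^\star-Y)^2] \;\le\; \frac{2R_T + 2\gamma\log(1/\delta)}{T}.
\]
Choosing $\sigma^2 = b^2/(d\ymax^2)$ makes $b^2/\sigma^2 = d\ymax^2$ and turns the log argument into $1+T b^2 r^2/(4d^2\ymax^2)$; absorbing the additive $d\ymax^2$ into the logarithmic term and substituting $2\gamma = 64\max\{\ymax^2,b^2r^2\}$ delivers the stated bound. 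The main obstacle is the clipping inequality $(\hat y_t - Y_t)^2\le(\tfrac12 z_t + \tfrac12\hat y_t - Y_t)^2$: the familiar ``clipping helps'' fact $(\clip_{\ymax}(z)-Y)^2\le(z-Y)^2$ is not quite what is needed, because the Azoury-Warmuth bound controls the squared deviation of $\tfrac12 z_t$ from $\tilde Y_t$ rather than of $\hat y_t$ from $Y_t$; the usable inequality instead passes through the midpoint and relies on $|Y_t|\le\ymax$ in an essential way. Beyond this the steps are routine: the Azoury-Warmuth regret bound is classical, the invocation of Theorem~\ref{th:improperOTB} with a Dirac $Q$ is immediate, and the tuning of $\sigma^2$ parallels the choice made in the proof of Proposition~\ref{prop:whpvaw}.
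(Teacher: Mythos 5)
Your proposal is correct and follows essentially the same route as the paper's proof: invoke Theorem~\ref{th:improperOTB} with $Q$ a point mass at $\theta^\star$ and $\gamma = 32\max\{\ymax^2, b^2 r^2\}$, then bound the shifted regret by combining the clipping inequality $\big(\clip_{\ymax}(z)-y\big)^2 \le \big(\tfrac12\clip_{\ymax}(z)+\tfrac12 z - y\big)^2$ for $|y|\le\ymax$ (which is exactly the paper's~\eqref{eq:clipbetter3}) with the standard Vovk--Azoury--Warmuth regret guarantee applied to features $\tfrac12 X_t$ and targets $\tilde{Y}_t$. The step you single out as crucial---transferring from $\tfrac12 z_t$ to the clipped prediction via the midpoint rather than via the naive $(\clip_{\ymax}(z)-y)^2 \le (z-y)^2$---is precisely the paper's argument.
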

\begin{proof}
    We first prove that 
        \begin{align}\label{eq:clipbetter3}
        \big(\clip_{\ymax}(z) - y\big)^2 - \big(\half \clip_{\ymax}(z) + \half z - y\big)^2 \leq 0~,
    \end{align}
        for any $y \in [-\ymax, \ymax]$. If $z \in [-\ymax, \ymax]$ then $\clip_{\ymax}(z) = z$ and so we only need to worry about $z \not \in [-\ymax, \ymax]$. We will prove the inequality for $z > \ymax$, the case where $z < -\ymax$ follows from symmetric arguments. Since $z > \clip_{\ymax}(z) = \ymax \geq y$ we have that $\tfrac{3}{2} \ymax + \half z - 2y > 0$ and $\ymax - z < 0$. Therefore, 
        \begin{align*}
        \big(\clip_{\ymax}(z) - y\big)^2 - \big(\half \clip_{\ymax}(z) + \half z - y\big)^2 & = \Big(\tfrac{3}{2} \clip_{\ymax}(z) + \half z - 2y\Big)\Big(\half \clip_{\ymax}(z) - \half z\Big) \\
        & = \half \Big(\tfrac{3}{2} \ymax + \half z - 2y\Big)(\ymax - z) \leq 0~.
    \end{align*}
        Since with the clipped predictor the squared loss is $8 \max\{b^2r^2, \ymax^2\}$-exp concave (the second derivative of $f(z) = (z-y)^2$ is 2 and the first derivative is $2(z-y)$), we may now apply Theorem~\ref{th:improperOTB} with $\gamma = 32 \max\{\ymax^2, b^2 r^2\}$, and $Q$ being a point-mass on $\theta^\star$ to find that, with probability at least $1 - \delta$,
        \begin{align}\label{eq:VAWhp1}
        \E\big[(\ybar_T(X) - Y)^2\big] - \E\Big[\big(X^\top\theta^\star - Y \big)^2\Big] \leq \frac{2R_T + 64 \max\{\ymax^2, b^2 r^2\} \log(1/\delta)}{T}~,
    \end{align}
        where, using the definition \eqref{eq:regretreq}, the shifted regret is given by
    \[
R_T = \sumT \bigg(\Big(\clip_{\ymax}\big(\theta_{\vaw, t}(X_t)^\top X_t\big) - Y_t\Big)^2 - \Big(\half \clip_{\ymax}\big(\theta_{\vaw, t}(X_t)^\top X_t\big) + \half X_t^\top\theta^\star - Y_t \Big)^2\bigg)~.
    \]
    It is only left to bound $R_T$. We apply equation~\eqref{eq:clipbetter3} to find 
        \begin{align*}
\big(\clip_{\ymax}\big(\theta_{\vaw, t}(X_t)^\top X_t\big) - Y_t\big)^2 & \leq \left(\half \theta_{\vaw, t}(X_t)^\top X_t + \half\clip_{\ymax}\left(\theta_{\vaw, t}(X_t)^\top X_t\right) -  Y_t \right)^2\\
        & = \big(\half \theta_{\vaw, t}(X_t)^\top X_t -  \tilde{Y}_t\big)^2~,
    \end{align*}
        where the equality is due to the definition of $\tilde{Y}_t$. Thus, by applying the above inequality and the definition of $\tilde{Y}_t$ together with \eqref{eq:clipbetter} we get
            \begin{align*}
        & \sumT \bigg(\Big(\clip_{\ymax}\big(\theta_{\vaw, t}(X_t)^\top X_t\big) - Y_t\Big)^2 - \Big(\half \clip_{\ymax}\big(\theta_{\vaw, t}(X_t)^\top X_t\big) + \half X_t^\top\theta^\star - Y_t \Big)^2\bigg) \\
        & \leq \sumT \Big(\big(\half \theta_{\vaw, t}(X_t)^\top X_t -  \tilde{Y}_t\big)^2 - \big(\half  X_t^\top\theta^\star - \tilde{Y}_t \big)^2\Big) \\
        & \leq \frac{1}{\sigma^2}\|\theta^\star\|_2^2 + \tfrac{1}{4}\max_t\{\tilde{Y}_t^2\} \sumT X_t^\top \Big(\sumt \tfrac{1}{4} X_s X_s^\top + \frac{1}{\sigma^2 }I\Big)^{-1}X_t~,
    \end{align*}
        where the last inequality is due to the regret guarantee of the Vovk-Azoury-Warmuth forecaster, see Section~4 in \cite{orabona2015generalized}. 
    
    The expression $\tfrac{1}{4}\max_t\{\tilde{Y}_t^2\}\sumT X_t^\top \Big(\sumt \tfrac{1}{4} X_s X_s^\top + \frac{1}{\sigma^2 }I\Big)^{-1}X_t$ can be bounded using standard methods, as seen on pages $318-320$ in \cite{cesa2006prediction} or in the proof of Corollary 7 in \cite{vanderhoeven2018many}. Furthermore, since $\max_t\{\tilde{Y}_t^2\} \leq 3 \ymax^2$, we have
        \begin{align*}
        \tfrac{1}{4}\max_t\{\tilde{Y}_t^2\}\sumT X_t^\top \Big(\sumt \tfrac{1}{4}X_s X_s^\top + \frac{1}{\sigma^2 }I\Big)^{-1}X_t & \leq 3\ymax^2 d\log\left(1 + \frac{T r^2\sigma^2}{4 d}\right)~.
    \end{align*}
        By utilizing $\sigma^2 = \frac{b^2}{d \ymax^2}$, we derive that $R_T \leq 4\ymax^2 d\log(1 + T \frac{b^2r^2}{4d^2\ymax^2})$. Incorporating this into equation~\eqref{eq:VAWhp1} finalizes the proof.
       \end{proof}
Proposition \ref{prop:efficientvaw2} provides a computationally efficient estimator, but its excess risk bound is weaker in terms of the dependence on $r$ compared to Proposition \ref{prop:whpvaw}. Nevertheless, the bound of Proposition \ref{prop:efficientvaw2} still shows a significant improvement over the lower bound for least squares shown in \cite{vavskevivcius2020suboptimality}. Specifically, in the setup of Proposition \ref{prop:efficientvaw2} there is a distribution with $\ymax=r=1$ and $b$ proportional to $\sqrt{d}$ such that
\[
\E\left[\E_{X, Y} \Big[\left(\widehat{\theta}_{\operatorname{ERM}}^\top X - Y\right)^2\Big] - \inf\limits_{\theta \in \Theta_b}\E_{X, Y}\left[(\theta^\top X - Y)^2\right]\right] \gtrsim \frac{d^{3/2}}{T}~,
\]
whenever $T \gtrsim d^3\log d$. Here the external expectation is taken with respect to $(X_i, Y_i)_{i = 1}^{T}$. For the same distribution, the upper bound of Proposition \ref{prop:efficientvaw2} can be written as
\[
\E\big[(\ybar_T(X) - Y)^2\big] - \inf_{\theta \in \ball} \E\big[(X^\top\theta - Y)^2\big] \lesssim \frac{d \log(T/d) + d\log(1/\delta)}{T}~.
\]
The later bound shows an improved dependence on the dimension. 

\section{Computational complexity and additional remarks}
\label{sec:compute}
Existing high-probability risk bounds for {improper} linear and logistic regression, see \cite{mourtada2021distribution} and \cite{vijaykumar2021localization} respectively, are computationally intractable or have exponential computational complexity in terms of the dimension. In contrast, our second algorithm for linear regression can be implemented in $O(d^2 T)$ runtime. A small {variation} of our algorithm for logistic regression can also be implemented efficiently. By replacing the Gaussian prior with a uniform prior over the unit ball we can apply the analysis presented in \cite[Appendix B]{foster2018logistic} to obtain the same bound with a {polynomial} algorithm. Specifically, the authors of \cite{foster2018logistic} develop a randomized implementation of their algorithm with polynomial runtime in the relevant parameters, which, with some minor changes, can also lead to an implementation of our algorithm. 

On the other hand, several efficient algorithms exist for logistic regression that are computationally efficient \cite{mourtada2019improper,jezequel2020efficient, jezequel2021mixability, agarwal2022efficient}. However, neither of these algorithms has been shown to guarantee high-probability excess risk bounds or to achieve a logarithmic dependence on the parameters. Proposition \ref{prop:efficientvaw2} plays a similar intermediate role in the context of these results for improper learners in linear regression. Our algorithm is computationally efficient, implies a high-probability excess risk upper bound, and outperforms constrained linear least squares. However, its dependence on the parameters may not be optimal.

\paragraph{Acknowledgments.}{
This work was partially done while DvdH was at the University of Milan partially supported by the MIUR PRIN grant Algorithms, Games, and Digital Markets (ALGADIMAR) and partially done while DvdH was at the University of Amsterdam supported by Netherlands Organization for Scientific Research (NWO), grant number VI.Vidi.192.095. NCB was partially supported by the EU Horizon 2020 ICT-48 research and innovation action under grant agreement 951847, project ELISE (European Learning and Intelligent Systems Excellence) and by the FAIR (Future Artificial Intelligence Research) project, funded by the NextGenerationEU program within the PNRR-PE-AI scheme. NZh is grateful to Jaouad Mourtada for several insightful discussions about the topic.
}

\vskip 0.2in
\DeclareRobustCommand{\VAN}[3]{#3}
{\footnotesize
\bibliography{sample.bib}
\bibliographystyle{abbrv}}
\DeclareRobustCommand{\VAN}[3]{#2}

\appendix
\section{Auxiliary lemmas}\label{app:concentrarion}
We need the following concentration inequality for martingales whose proof can be found in \cite[Theorem 1]{beygelzimer2011contextual}.
\begin{lemma}[A version of Freedman's inequality]\label{lem:bernie}
Let $X_1, \ldots, X_T$ be a martingale
difference sequence adapted to a filtration $(\mathcal{F}_i)_{i \le T}$. That is, in particular, $\E_{t-1}[X_t] = 0$.
Suppose that $|X_t| \leq R$ almost surely. Then for any $\delta \in (0,1), \lambda \in [0, 1/R]$, with probability at least $1-\delta$, it holds that
\begin{equation*}
\label{eq:firstfreedmanineq} 
    \sumT X_t \leq \lambda(e-2)\sumT \E_{t-1}[X_t^2] + \frac{\ln(1/\delta)}{\lambda}~.
\end{equation*}
\end{lemma}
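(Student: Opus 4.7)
The plan is to prove this classical Freedman-type bound by constructing an exponential supermartingale from the sequence $(X_t)$ and applying Markov's inequality. The entire argument rests on one pointwise analytic bound on the exponential, and the martingale/boundedness hypotheses are then used exactly once each.

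First I would record the inequality $e^x \le 1 + x + (e-2)x^2$ for every $x$ with $|x| \le 1$. This is standard: on $[-1,1]$ the function $g(x) = 1 + x + (e-2)x^2 - e^x$ vanishes at $x=0$ and at $x=1$, its second derivative $2(e-2) - e^x$ changes sign at most once on $[-1,1]$, and a short sign analysis of $g'$ yields $g \ge 0$ throughout. This is the sole source of the constant $e-2$ in the claim; strengthening it would require tighter tail assumptions on $X_t$. Now fix $\lambda \in (0, 1/R]$; the case $\lambda = 0$ is vacuous since the right-hand side of the target bound is $+\infty$. Applying the analytic inequality to $\lambda X_t$, whose absolute value is at most $\lambda R \le 1$, and taking conditional expectations $\E_{t-1}[\,\cdot\,]$, the martingale-difference hypothesis $\E_{t-1}[X_t] = 0$ yields
\[
\E_{t-1}\bigl[e^{\lambda X_t}\bigr] \;\le\; 1 + (e-2)\lambda^2 \E_{t-1}[X_t^2] \;\le\; \exp\!\bigl((e-2)\lambda^2 \E_{t-1}[X_t^2]\bigr),
\]
where the last step uses $1 + u \le e^u$. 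Consequently, the process
\[
Z_t \;=\; \exp\!\left(\lambda \sum_{s=1}^t X_s \;-\; (e-2)\lambda^2 \sum_{s=1}^t \E_{s-1}[X_s^2]\right)
\]
satisfies $\E_{t-1}[Z_t] \le Z_{t-1}$, so $(Z_t)$ is a nonnegative supermartingale adapted to $(\mathcal{F}_t)$ with $Z_0 = 1$; in particular $\E[Z_T] \le 1$.

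Finally, Markov's inequality gives $\Pp(Z_T \ge 1/\delta) \le \delta\,\E[Z_T] \le \delta$. On the complementary event, which has probability at least $1-\delta$, taking logarithms and dividing by $\lambda > 0$ rearranges to
\[
\sumT X_t \;\le\; (e-2)\lambda \sumT \E_{t-1}[X_t^2] + \frac{\ln(1/\delta)}{\lambda},
\]
which matches the statement verbatim. There is no serious obstacle in this proof; the only subtlety worth flagging is that the compensator inside $Z_t$ must be the \emph{predictable} quadratic variation $\sum_s \E_{s-1}[X_s^2]$ rather than $\sum_s X_s^2$, since that is what the conditional exponential bound actually produces and what makes $Z_t/Z_{t-1}$ have conditional mean $\le 1$. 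Beyond Markov's inequality no further probabilistic machinery is needed.
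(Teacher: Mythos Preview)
Your proof is correct and is the standard exponential-supermartingale argument for this Freedman-type inequality. The paper itself does not supply a proof of this lemma; it simply cites \cite[Theorem~1]{beygelzimer2011contextual}, whose argument is essentially the one you have written out (the key analytic input $e^x \le 1 + x + (e-2)x^2$ on $[-1,1]$, conditional mgf bound, supermartingale $Z_t$, then Markov). So there is nothing to compare beyond noting that you have filled in what the paper outsources to a reference.
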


We also use the following result.

\begin{lemma}\label{lem:surrogateexpconcave}
Suppose that $h:\domainw \mapsto \mathbb{R}$ is $\alpha$-exp concave. Then for $x, y \in \domainw$ the function $\tilde{h}(x) = h\big(\half x + \half y\big)$ is $ \alpha$-exp-concave.
\end{lemma}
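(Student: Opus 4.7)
The plan is a direct one-line chain rule calculation. Fix $y \in \mathcal{W}$ and regard $\tilde{h}$ as the composition $\tilde h = h \circ L$, where $L(x) = \half x + \half y$ is an affine map. Differentiating, I get $\tilde{h}'(x) = \half\, h'(L(x))$ and $\tilde{h}''(x) = \tfrac{1}{4}\, h''(L(x))$. Substituting into the exp-concavity condition~\eqref{eq:defexpconcave} for $\tilde h$, the claim $\alpha\bigl(\tilde h'(x)\bigr)^2 \le \tilde h''(x)$ becomes
\[
\alpha \cdot \tfrac{1}{4}\,h'(L(x))^2 \;\le\; \tfrac{1}{4}\, h''(L(x)),
\]
which, after cancelling the common factor $\tfrac{1}{4}$, is exactly the $\alpha$-exp-concavity of $h$ evaluated at the point $L(x) \in \mathcal{W}$. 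So the scaling by $\half$ in the inner argument contributes the same factor of $\tfrac{1}{4}$ to both $(\tilde h')^2$ and $\tilde h''$, and the parameter $\alpha$ is preserved exactly.

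As a sanity check I would also verify this via the equivalent formulation: the condition $\alpha h'(w)^2 \le h''(w)$ is equivalent to the concavity of $w \mapsto \exp(-\alpha h(w))$, since
\[
\bigl(\exp(-\alpha h)\bigr)'' = \alpha\, \exp(-\alpha h)\,\bigl(\alpha\, h'(w)^2 - h''(w)\bigr).
\]
Because concavity is preserved under composition with affine maps, $\exp(-\alpha \tilde h) = \exp(-\alpha h) \circ L$ is concave, giving another proof that $\tilde h$ is $\alpha$-exp-concave. This perspective also makes clear why the construction works for any affine inner map, not just the midpoint map.

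There is essentially no obstacle here: the entire content is that an affine reparameterization preserves $\alpha$-exp-concavity, and the proof is immediate from either the derivative condition or the log-concavity-of-$e^{-\alpha h}$ reformulation. The lemma is used in Section~\ref{sec:mainresult} only to justify that the shifted loss $\tilde\ell_t$ in~\eqref{eq:shiftloss} inherits exp-concavity from $\ell$, so that EWA can be applied to the shifted losses as in Proposition~\ref{prop:boundingtheregret}.
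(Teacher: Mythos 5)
Your chain-rule calculation is exactly the paper's proof: differentiate twice, pick up the factor $\tfrac14$ in both $(\tilde h')^2$ and $\tilde h''$, and invoke the exp-concavity of $h$ at $\tfrac12 x + \tfrac12 y$. The secondary check via concavity of $\exp(-\alpha h)\circ L$ is a nice alternative but the paper does not go there.
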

\begin{proof}
We have that 
\begin{align*}
    \alpha (\tilde{h}'(x))^2 & = \frac{\alpha}{4}\big( h'(\half x + \half y)\big)^2 \leq \frac{1}{4}h''(\half x + \half y) = \tilde{h}''(x)~,
\end{align*}
where the inequality is due to the exp-concavity assumption on $h$. Thus, we have that $\alpha (\tilde{h}'(x))^2 \leq \tilde{h}''(x)$ and therefore we may conclude that $\tilde{h}$ is $\alpha$-exp concave. 
\end{proof}

\section{Exponential weights}\label{app:exponential weights}
Let
\begin{align}\label{eq:ewdistr}
    dP_{t+1}(f) = \frac{e^{-\alpha \sumt \tilde{\ell}_s(f)}dP_1(f)}{\int e^{-\alpha \sumt \tilde{\ell}_s(f)}dP_1(f)}~,
\end{align}
where $P_1$ is a prior distribution over $\Fset$, $\tilde{\ell}_t(f) = \ell(\half f(X_t) + \half \hat f_{t}(X_t), Y_t)$, and $\hat f_t = \E_{f \sim P_t}[f]$. 
This is known as the exponential weights algorithm on losses $\tilde{\ell}_1,\ldots,\tilde{\ell}_{t}$. 
\begin{lemma}\label{lem:EWshiftloss}
Suppose that $\ell:\domainw \times \Yset \mapsto [0, m]$ is $\alpha$-exp-concave in its first argument. Then, with $\hat{f} = \E_{f \sim P_t}[f] $, and with $P_t$ as defined in equation~\eqref{eq:ewdistr} for any prior distribution $P_1$ over $\Fset$,
\begin{align*}
    \sumT\left(\ell(\hat f_{t}(X_t), Y_t) -  \E_{f\sim Q}\big[\ell(\half f(X_t) + \half \hat f_{t}(X_t), Y_t)\big]\right) \leq \frac{\KL(Q\|P_1)}{\alpha}~.
\end{align*}
\end{lemma}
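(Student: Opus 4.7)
The plan is to run the standard aggregating-algorithm analysis, but exploit the special structure of the shifted loss $\tilde\ell_t$ so that the Jensen step collapses cleanly onto $\hat f_t$. The key observation is that since $f\sim P_t$ gives $\E_{f\sim P_t}[\tfrac12 f(X_t) + \tfrac12 \hat f_t(X_t)] = \hat f_t(X_t)$, the argument of $\ell$ evaluated at the $P_t$-mean is precisely $\hat f_t(X_t)$. Exp-concavity of $z\mapsto\ell(z,Y_t)$, i.e.\ concavity of $z\mapsto e^{-\alpha\ell(z,Y_t)}$ (which is immediate from the definition \eqref{eq:defexpconcave} of exp-concavity, since $\tfrac{d^2}{dz^2} e^{-\alpha\ell} = -\alpha e^{-\alpha\ell}\bigl(\ell''-\alpha(\ell')^2\bigr)\le 0$), combined with Jensen's inequality applied to the random variable $\tfrac12 f(X_t) + \tfrac12 \hat f_t(X_t)$, then gives
\[
\ell\bigl(\hat f_t(X_t),Y_t\bigr) \;\le\; -\tfrac{1}{\alpha}\log \E_{f\sim P_t}\!\left[e^{-\alpha\tilde\ell_t(f)}\right].
\]

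Next I would telescope. Define the partition functions $Z_t = \int e^{-\alpha\sum_{s=1}^{t}\tilde\ell_s(f)}\,dP_1(f)$ with $Z_0 = 1$. By the definition \eqref{eq:ewdistr} of $P_{t+1}$, the ratio $Z_t/Z_{t-1}$ equals exactly $\E_{f\sim P_t}[e^{-\alpha\tilde\ell_t(f)}]$. Summing the previous display over $t=1,\dots,T$ therefore yields the telescoping bound
\[
\sum_{t=1}^T \ell\bigl(\hat f_t(X_t),Y_t\bigr) \;\le\; -\tfrac{1}{\alpha}\sum_{t=1}^T \log\frac{Z_t}{Z_{t-1}} \;=\; -\tfrac{1}{\alpha}\log Z_T.
\]

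The final step is the Donsker--Varadhan variational representation of the log-partition function: for any probability measure $Q$ on $\Fset$,
\[
-\tfrac{1}{\alpha}\log Z_T \;\le\; \sum_{t=1}^T \E_{f\sim Q}\bigl[\tilde\ell_t(f)\bigr] + \tfrac{1}{\alpha}\KL(Q\|P_1).
\]
Recalling that $\tilde\ell_t(f) = \ell\bigl(\tfrac12 f(X_t)+\tfrac12\hat f_t(X_t),Y_t\bigr)$ and rearranging gives the claimed bound.

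There is no real obstacle here: the only conceptual content is the Jensen/exp-concavity step, which is immediate once one notes that the midpoint construction inside $\tilde\ell_t$ makes $\hat f_t(X_t)$ the conditional expectation of its first argument under $P_t$. The rest is the textbook telescoping $+$ Donsker--Varadhan argument for exponential weights on exp-concave losses; the assumption $\ell\in[0,m]$ is only used to ensure the integrals $Z_t$ are finite.
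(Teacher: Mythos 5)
Your proof is correct and takes essentially the same route as the paper: the paper cites the mix-loss/Donsker--Varadhan telescoping from \cite{vanderhoeven2018many} first and then drops the term $\tilde\ell_t(\hat f_t) + \tfrac{1}{\alpha}\ln\E_{f\sim P_t}[e^{-\alpha\tilde\ell_t(f)}] \le 0$ using exp-concavity of $\tilde\ell_t$ (via Lemma~\ref{lem:surrogateexpconcave}), whereas you apply the Jensen/exp-concavity step first and telescope afterwards, which is the same argument in a different order. Your observation that $\E_{f\sim P_t}\bigl[\tfrac12 f(X_t)+\tfrac12\hat f_t(X_t)\bigr]=\hat f_t(X_t)$ and the direct use of concavity of $z\mapsto e^{-\alpha\ell(z,Y_t)}$ is a slightly more streamlined way of saying what the paper does via Lemma~\ref{lem:surrogateexpconcave}, but it is the identical idea.
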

\begin{proof}

Since the losses $\tilde{\ell}_t$ are convex, a standard computation as in \cite[Lemma~1]{vanderhoeven2018many} shows that for any distribution $Q$ over $\Fset$~,
\begin{align*}
    & \sumT \Big(\tilde{\ell}_t(\hat f_t) - \E_{f \sim Q}\big[\tilde{\ell}_t(f)\big]\Big) \\
    & \leq \frac{\KL(Q\|P_1)}{\alpha} + \sumT \Bigg(\tilde{\ell}_t(\hat f_t) + \frac{1}{\alpha} \ln\Big(\E_{f \sim P_t}\Big[e^{-\alpha \tilde{\ell}_t(f)}\Big]\Big)\Bigg) \\
    & \leq \frac{\KL(Q\|P_1)}{\alpha}~,
\end{align*}
where the second inequality is due to the fact that $\tilde{\ell}_t$ is $\alpha$-exp-concave (Lemma~\ref{lem:surrogateexpconcave}).
\end{proof}

\section{Computations of Example \ref{ex:gausslinmodel}}
\label{compexample}
To verify the bound appearing in Example \ref{ex:gausslinmodel}, we provide the following computation.
For any $\theta \in \Theta_b$, one can easily check that
\begin{align*}
&\left|\ell_{\mu, {p_0}}\left(\E_{\theta \sim P_t}\left[p(Y_t|X_t, \theta)\right]\right) -\ell_{\mu, {p_0}}(p(Y_t|X_t, \theta))\right|
\\
&= \left|\log\left(\frac{\frac{(1 - \mu)}{\sqrt{\pi}}\left(\E_{\theta \sim P_t}[e^{-(Y_t - X_t^\top \theta)^2}]\right)+\frac{\mu}{2\sqrt{\pi}}\left(e^{-(Y_t - rb)^2} + e^{-(Y_t + rb)^2}\right)}{(1 - \mu)p(Y_t|X_t^{\top}\theta) + \frac{\mu}{2\sqrt{\pi}}\left(e^{-(Y_t - rb)^2} + e^{-(Y_t + rb)^2}\right)}\right)\right|
\\
&\le \log\left(1 + \frac{2(1 - \mu)}{\mu}\right) \le \log\left(\frac{2}{\mu}\right)~.
\end{align*}
Corollary~\ref{cor:GLM} and optimization with respect to $\mu$ conclude the derivation.

\end{document}